\documentclass[letterpaper]{article} 
\usepackage{aaai24}  
\usepackage{times}  
\usepackage{helvet}  
\usepackage{courier}  
\usepackage[hyphens]{url}  
\usepackage{graphicx} 
\urlstyle{rm} 
\usepackage{natbib}  
\usepackage{caption} 
\frenchspacing  
\setlength{\pdfpagewidth}{8.5in} 
\setlength{\pdfpageheight}{11in} 
%

%
%
\pdfinfo{
/TemplateVersion (2024.1)
}

\usepackage{amsmath,amssymb,amsfonts,array}
\usepackage{algorithmic}
\usepackage[noline,ruled,algo2e,linesnumbered]{algorithm2e}
\usepackage{caption}
\usepackage{subcaption}
\usepackage{graphicx}

\SetKwIF{If}{ElseIf}{Else}{if}{then}{else if}{else}{end if}%

 %
 %
 %
\newcommand{\cX}{\mathcal{X}} %
\newcommand{\cY}{\mathcal{Y}} %
\newcommand{\No}{{N_o}} %

\DeclareMathOperator*{\argmin}{arg\,min}

\DeclareMathOperator*{\GD}{\text{GD}^+}
\DeclareMathOperator*{\IGD}{\text{IGD}^+}

\newcommand{\lcb}{\texttt{LCB}\xspace} %
 %

\usepackage[english]{babel}
\usepackage{amsthm}
\newtheorem{claim}{Claim}
\newtheorem{definition}{Definition}
\newtheorem{corollary}{Corollary}

\usepackage{cleveref}
\usepackage{color}
\usepackage{marginnote}


\usepackage{todonotes}



\setcounter{secnumdepth}{0} 

%


\title{Parallel Multi-Objective Hyperparameter Optimization with Uniform Normalization and Bounded Objectives}
\author{
    Romain Egele\textsuperscript{\rm 1,2,}\equalcontrib, Tyler Chang\textsuperscript{\rm 2,}\equalcontrib, Yixuan Sun\textsuperscript{\rm 2},\\
    Venkatram Vishwanath\textsuperscript{\rm 2}, Prasanna Balaprakash\textsuperscript{\rm 3}
}
\affiliations{
    \textsuperscript{\rm 1}Universit\'e Paris-Saclay (France),
    \textsuperscript{\rm 2}Argonne National Laboratory (USA)\\
    \textsuperscript{\rm 3}Oak Ridge National Laboratory (USA)\\


    romain.egele@universite-paris-saclay.fr
%
}

\begin{document}

\maketitle

\begin{abstract} 
Machine learning (ML) methods offer a wide range of configurable hyperparameters that have a significant influence on their performance. While accuracy is a commonly used performance objective, in many settings, it is not sufficient. Optimizing the ML models with respect to multiple objectives such as accuracy, confidence, fairness, calibration, privacy, latency, and memory consumption is becoming crucial. To that end, hyperparameter optimization, the approach to systematically optimize the hyperparameters, which is already challenging for a single objective, is even more challenging for multiple objectives. In addition, the differences in objective scales, the failures, and the presence of outlier values in objectives make the problem even harder. We propose a multi-objective Bayesian optimization (MoBO) algorithm that addresses these problems through uniform objective normalization and randomized weights in scalarization. We increase the efficiency of our approach by imposing constraints on the objective to avoid exploring unnecessary configurations (e.g., insufficient accuracy). Finally, we leverage an approach to parallelize the MoBO which results in a 5x speed-up when using 16x more workers.
\end{abstract}


\section{Introduction}
\label{sec:introduction}

Machine learning (ML) libraries (e.g., Scikit-Learn, Torch, Keras, Tensorflow) now offer a variety of data processing and modeling algorithms that can be combined to build complex workflows. Oftentimes, such learning workflows expose a large number of hyperparameters which gives the possibility to customize the workflow (e.g., optimizer settings, neural architecture, data processing, etc...). However, tuning a large number of hyperparameters is manually intractable  due to its combinatorial nature. Therefore, many algorithms have been developed to tackle the hyperparameter optimization (HPO) problem.

Even though the field of HPO has made significant advances in the context of single-objective optimization it seems now insufficient to consider only one objective as the set of competing objectives (metrics) to improve has grown. In recent years, it has become necessary for many applications to consider additional metrics, such as privacy~\cite{dwork2008differential}, bias and social fairness~\cite{mehrabi2021survey}, calibration~\cite{song2021classifier}, predictive uncertainty~\cite{begoli2019need,gawlikowski2023survey}, explainability~\cite{burkart2021survey,tjoa2020survey}, adversarial robustness~\cite{muhammad2022survey}, temporal and memory complexity~\cite{tan2019mnasnet}.
Equally important, recent progress in ML was possible thanks to an increase in parallel computation~\cite{jordan2015machine}. As the multi-objective problem is significantly more challenging than single-objective, a promising approach to improve consists in scaling the search to leverage multiple compute units (such as GPUs) which we study in this work.

After reviewing the current multi-objective hyperparameter optimization (MOHPO) literature we identify a few gaps. First, we notice that the strongest contenders such as NSGA-II, a variant of genetic algorithms, lack iteration efficiency (i.e., improvement per completed black-box evaluations). This is important for expensive black-box such as in HPO. Second, we notice that most methods do not take into account practical considerations such as exploring only interesting trade-offs. For example, accuracy can be so low that such a model (even if faster) would not be a good candidate for other metrics. Third, we notice that model-based methods often do not provide solution diversity and require specific  adaptation and customization (e.g., choice of scalarization function, trade-off weights, normalization of objectives, tuning of a surrogate). Finally, we notice that some model-based methods do not benefit from scaling to more parallel resources as they do not gain significantly in performance or can simply not keep up with the demand (e.g., Gaussian-process-based methods).

We focus on developing a method for MOHPO that addresses the presented limitations: (a) improving iteration efficiency, (b) only exploring objectives trade-offs of interest, and (c) using parallelism to improve the effectiveness of the HPO.
To that end, we develop a parallel search approach called decentralized multi-objective Bayesian optimization (D-MoBO) that combines a Random-Forest model, a randomized scalarization, a quantile transformation of objectives, and a penalty function. 
The methodological contributions are:
\begin{enumerate}
    \item A quantile uniform normalization of objectives gives more importance to the approximated Pareto-Front in the hypervolume indicator and is also robust to outliers.
    \item A penalty to avoid ``out of interesting range'' objective trade-offs.
    \item A parallel search boosts optimization performance.
\end{enumerate}

\section{Background on Multi-Objective Optimization}
\label{sec:background}

In a generic multi-objective optimization (MOO) problem, the cost function is a vector-valued $f: \cX \rightarrow \cY$, $\cY = \mathbb{R}^\No$, where $\No$ is the number of objectives which are either maximized or minimized. Denote each component of $f$ by $f_i(x)$, and without loss of generality, assume that our goal is to minimize all components; then the MOO problem can be written as
\begin{align}
    \label{eq:generic-moo}
    & \min_{x \in \cX} (f_1(x), ..., f_\No(x)) \\
    & ~\text{s.t. } c(x) = 1 \nonumber
\end{align}
where $c: \cX \rightarrow \{0,1\}$ represents a constraint function.
Then the {\sl feasible domain} is the set of points in input space which respect the constraint, $\bar{\cX} := \{ x : c(x) = 1, \forall x \in \cX \}$.
The {\sl attainable set} is the image by $f$ of the {\sl feasible domain} $\bar{\cY} := \{ f(x) : \forall x \in \bar{\cX} \}$.
In most applications, it is not possible to simultaneously minimize all objectives as they often ``conflict'' with each other, which means that an improvement in one objective deteriorates other objectives (otherwise the MOO problem would be degenerated and therefore equivalent to a single objective problem).
Therefore, the {\bf solution to (\ref{eq:generic-moo}) is a {\sl set} of possibly infinite cardinal}.
Because these solutions may be incomparable, the solution set is defined via partial ordering as opposed to total ordering (as in the single-objective case).
To simplify the notation we will denote $f(x) := y$ and $f_i(x) := y_i$. 

\begin{definition}[Dominance partial ordering]
For two points in the {\sl attainable set}, $y^{(1)},y^{(2)} \in \bar{\cY}$,
$y^{(1)}$ is said to \emph{\bf dominate} $y^{(2)}$, written
$y^{(1)} \prec y^{(2)}$ if and only if for all
$i \in [1, \No] ~ y^{(1)}_i \leq y^{(2)}_i$  and, there exist $j \in [1,\No]$ such that $y^{(1)}_j < y^{(2)}_j$.
\end{definition}

With this definition, it is possible to define the notion of {\sl Pareto optimality}.

\begin{definition}[Non Dominance]
A point of the attainable set $y^\star = f(x^\star)$, $x^\star \in \bar{\cX}$ is said to be
\emph{\bf non-dominated} if and only if $y \not\prec y^\star$
for all $y \in \bar{\cY}$.
The corresponding $x^\star$ is said to be \emph{\bf efficient}.
\end{definition}

Then the solution to (\ref{eq:generic-moo}) is called the {\sl Pareto-Frontier} (PF), which is the set of all non-dominated points formally defined as $\mathcal{F} := \{y^\star : y^\star = f(x^\star) \text{ is not dominated }\}$. The corresponding set of all efficient points $x^\star$ is called the {\sl Pareto-Set} (PS).

In general, when all objectives are conflicting and continuous, the PF is a $(\No-1)$-dimensional trade-off surface embedded in $\cY$ (e.g., with 2 objectives the PF is a curve).
To read more about MOO definitions and terminology, see \cite[Ch.~1-2]{ehrgott2005multicriteria}.

\subsection{Performance Indicators}
\label{sec:performance-indicators}

As in our set of methods of interest, it is not possible to return an infinite set of solutions to cover the PF, we must approximate it via a discrete set instead. Therefore a central challenge of MOO research is to measure the quality of the approximation of the PF.

To evaluate how well a discrete set of approximate solutions describe the PF's true shape is an open problem, and many metrics of MOO performance have been proposed \cite{audet2021performance}.
One desirable property of a MOO performance indicator is {\sl Pareto compliance}.
Let $A$ and $B$ be the sets of (approximately) non-dominated solutions returned by two different algorithms.
Then $A \prec B$ if for every $b \in B$, there exists $a \in A$ such that $a \prec b$.
An indicator $I$ is {\sl Pareto compliant} if either $A \prec B$ implies that $I(A) < I(B)$ or $I(A) > I(B)$ (depending on whether the indicator is increasing or decreasing with improved quality).
To our knowledge, the only MOO performance indicators that possess this property are the {\bf hypervolume indicator} (HVI) and the {\bf improved inverse generational distance} ($\IGD$) \cite{ishibuchi2015modified}.
The {\bf improved generational distance} ($\GD$) will also be of interest to us, although it does not possess this property.
All of these methods suffer from one drawback in that they rely on an appropriate choice of one or more reference points, which may require {\sl a priori} knowledge of the true PF.

The HVI is given by $HVI(A) = V(\cup_{a\in A}[a, r])$, where $V(\cdot)$ denotes the volume, $[a, r]$ denotes a hypercube with lower bound $a$ and upper bound $r$, and $r$ denotes the reference point, which must be dominated by every solution point (a.k.a., the \emph{Nadir point}).
As mentioned above, the HVI is Pareto compliant.
In practice, it is typically possible to select the reference point for the HVI by using unacceptably bad scores for each objective.
However, an overly poor choice of reference points can lead to non-interpretable large values.
The $HVI$ is also extremely sensitive to poor problem scaling.
Finally, it is worth noting that $HVI$ is exponentially expensive to calculate when $\No>2$ \cite{ishibuchi2015modified}.

The $\IGD$ and $\GD$ indicators are defined in terms of a modified distance metric $d^{+}(\hat{y}, y) := \|\max(\hat{y} - y, 0)\|_2$, where $y$ is a target point, $\hat{y}$ is an objective point in the estimated PF, and the $\max$ is taken element-wise. Then given a set of target points $Y$, the $\GD$ indicator is given by $\GD(\hat{Y}; Y) := \sum_{\hat{y}\in \hat{Y}}\min_{y\in Y}d^{+}(\hat{y}, y) / |\hat{Y}|$, and the $\IGD$ indicator is given by $\IGD(\hat{Y}; Y) := \sum_{y\in Y}\min_{\hat{y}\in \hat{Y}}d^+(\hat{y}, y) / |Y|$ (with $|.|$ the cardinal operator). In practice, these indicators are faster to compute than HVI. However, the need for a large target set $Y$ that ``covers'' the entire PF can be impossible to satisfy when the true PF is unknown, making these indicators difficult to use in real-world applications.

Balancing solution {\sl quality} (i.e., closeness between approximated and true PF) against {\sl diversity} (i.e., coverage of the approximated PF) is considered one of the central challenges of MOO \cite{audet2021performance,deb2002fast}.
While HVI and $\IGD$ are both Pareto compliant, they are redundant in that both tend to place a higher emphasis on diversity over quality. In this paper, we prefer using the HVI, which is more standard in the MOO literature.
But, for algebraic problems available in our appendix where the true PF is known, we will also utilize $\GD$ as an orthogonal indicator, which places a higher emphasis on solution quality.

\subsection{Scalarization}
\label{sec:background-scalarization}

One classical approach to solving a multi-objective problem is to transform them into a single-objective problem by using a scalarization function $s_w : \cY \rightarrow \mathbb{R}$ \cite[Ch.~3,4]{ehrgott2005multicriteria}, parameterized by a weight vector $w\geq 0$ normalized to 1, which reflects the trade-off between objectives. The optimization problem now becomes

\begin{align}
\label{eq:scalarization}
    & \min_{x\in \cX} s_w(f_1(x),\ldots,f_\No(x)) \\
    & ~\text{s.t. } c(x) = 1 \nonumber
\end{align}

For most common scalarizations $s_w$, each choice of $w$ produces a different solution $x^\star$ to (\ref{eq:scalarization}), such that $x^\star$ is efficient.
By solving many instances of Problem~\ref{eq:scalarization} with different $w$, numerous solutions are produced giving an approximation to the PS. Many existing scalarization functions were proposed in the literature~\cite{chugh2020scalarizing}, and one way to achieve parallelism is by solving numerous independent scalarizations~\cite{chang2023parmoo,deb2013evolutionary}.

However, scalarization can be sensitive to scales and curvatures of objectives and often fails to produce complete coverage of the approximated PF. In particular, such characteristics can cause different trade-off parameters $w$ to produce similar or identical solution points $x^\star$. This can result in grouped solutions on the approximated PF and may leave certain regions sparsely populated or entirely empty. Such unbalanced and clustered coverage can give the user a biased understanding of the objectives trade-off.

To resolve such limitations it is possible to adaptively select weights such that the approximated PF has a better coverage \cite{das1998,deshpande2016multiobjective}. However, this can induce a sequential dependence between different instances of Problem~\ref{eq:scalarization}, which can limit parallelism.

Alternatives to scalarization also exist and are often based on maximizing the HVI. In Bayesian optimization (BO), one would often maximize the expected improvement in the HVI in each iteration.
The direct drawback of such an approach is the computational expense of calculating expected HVI improvement with more than two objectives, so one typically needs to approximate instead~\cite{daulton2020differentiable}.
Another kind, more scalable, is based upon multi-objective generalizations of genetic algorithms, which sort points according to the non-dominance relation during the selection phase. The most popular in this class is the non-dominated sorting genetic algorithm (NSGAII) \cite{deb2002fast}, which we use as a baseline in our experiments. One downside is that genetic algorithms can require several generations before they become significantly different from random sampling, and many more to converge. An efficient parallelization through the island model also exists for NSGAII \cite{martens2013asynchronous}.

\subsection{Hyperparameter Optimization}
\label{sec:hyperparameter-optimization}

In HPO, $\cX$ is often a mixed-integer search space composed of categorical, discrete, and continuous variables. The target objectives are non-smooth. The optimized workflow is expensive to evaluate (in memory, in time). The run-time of evaluated workflows depends on the hyperparameters and has variability. Also, failures (a.k.a., hidden constraints) can appear for some hyperparameter configurations (e.g., out-of-memory, neural network training resulting in \texttt{NaN}).

To our knowledge, very few open-source software have features that can handle all these requirements and also perform asynchronous large-scale parallelism.
Some notable software include pymoo~\cite{blank2020pymoo} (official implementation of NSGAII and other genetic algorithms), BoTorch~\cite{balandat2020botorch} (focus on Bayesian optimization), ParMOO~\cite{chang2023parmoo} (customized MOO algorithms), DeepHyper~\cite{deephyper_software} (parallel AutoML for HPC), Optuna~\cite{akiba2019optuna} (HPO solvers).
More details about some software features can be found in \cite[Table~1]{chang2023designing} and \cite{karl2022multiobjective}. In this study, we will focus on DeepHyper for its asynchronous BO solver and Optuna for NSGAII and MoTPE~ \cite{ozaki2022multiobjective,ozaki2020multiobjective} solvers which offer the most relevant features. We do not consider BoTorch as it is not suitable for large evaluations budgets \cite{optunadocs}.

\section{Method}
\label{sec:method}

\begin{figure*} 
    \centering
    \begin{subfigure}{0.33\textwidth}
        \centering
        \includegraphics[width=\textwidth]{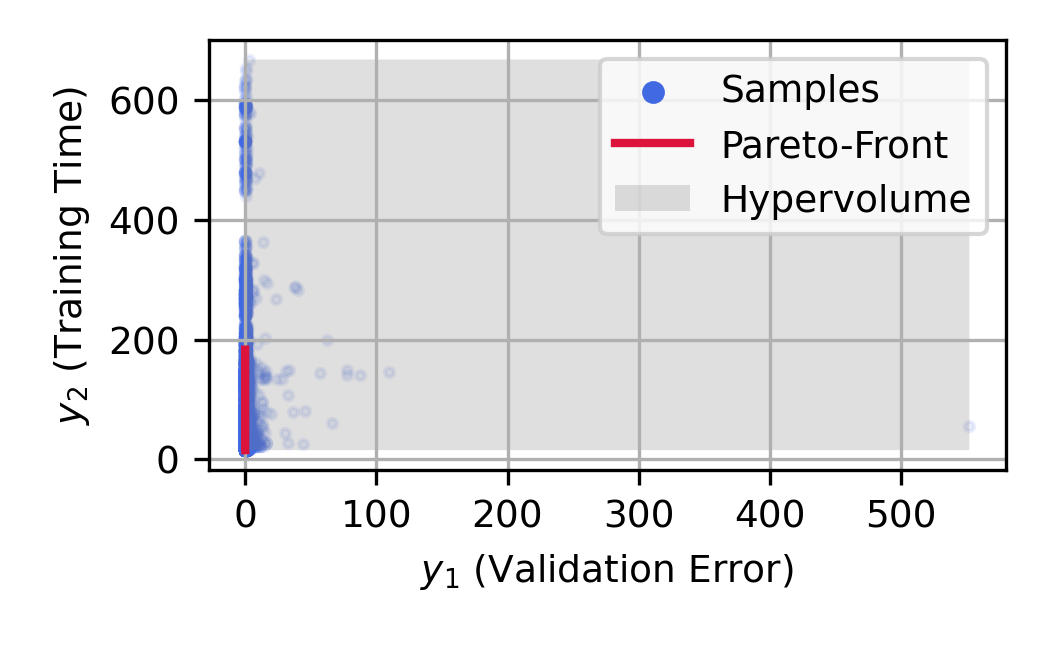}
        \caption{Identity (Id)}
        \label{fig:scaler_identity_navalpropulsion}
    \end{subfigure}
    \begin{subfigure}{0.33\textwidth}
        \centering
        \includegraphics[width=\textwidth]{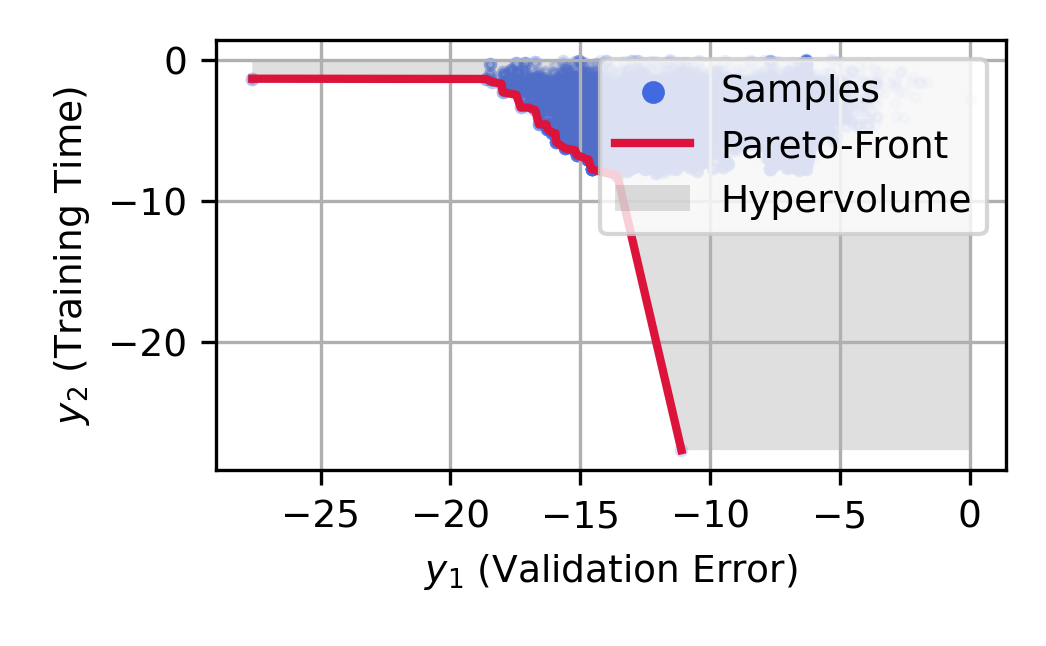}
        \caption{MinMax-Log (MML)}
        \label{fig:scaler_minmaxlog_navalpropulsion}
    \end{subfigure}
    \begin{subfigure}{0.33\textwidth}
        \centering
        \includegraphics[width=\textwidth]{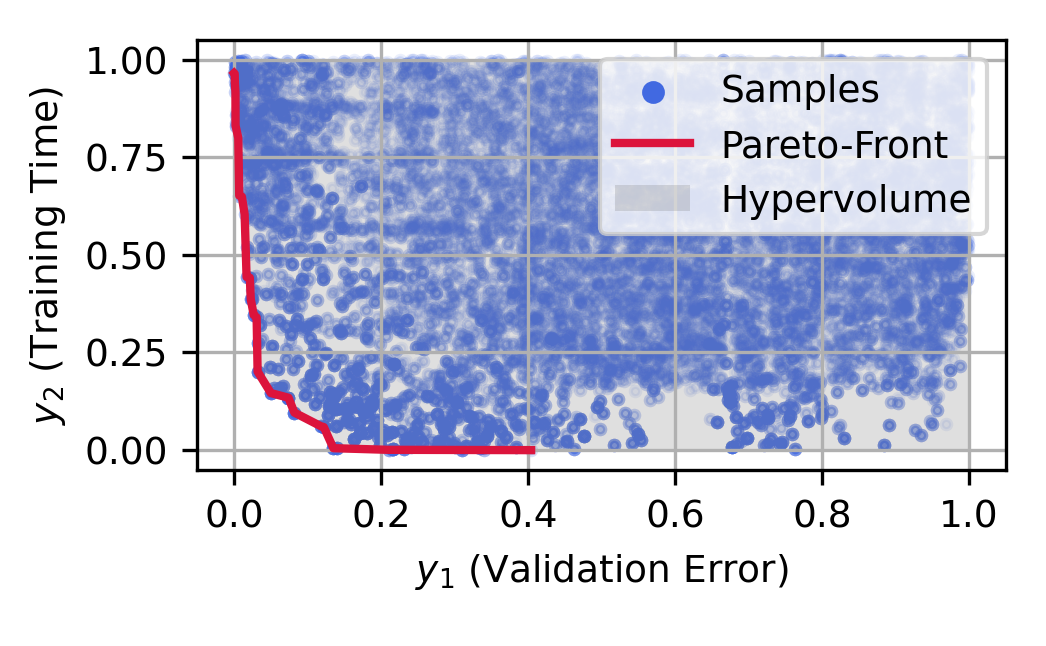}
        \caption{Quantile-Uniform (QU)}
        \label{fig:scaler_quantile-uniform_navalpropulsion}
    \end{subfigure}
    \caption{Comparing normalization of objectives on a 2-objectives hyperparameter optimization instance (NavalPropulsion from HPOBench) where both validation error $y_1$ and training time $y_2$ are minimized.}
    \label{fig:objective-scaling-methods}
\end{figure*}

In this section, we propose our decentralized multi-objective Bayesian optimization (D-MoBO) algorithm. It combines a parallel decentralized  asynchronous architecture with a sequential MoBO algorithm.

\subsection{Decentralized Bayesian Optimization}
\label{sec:method-decentralized-bo}

The asynchronous decentralized scheme we use was proposed in~\cite{egele2022asynchronous}. The main idea is to start independent Bayesian optimization agents in parallel. Each agent is running a sequential BO algorithm but stores its observations in a shared memory space (e.g., database). The trick is to initialize different exploitation-exploration parameters combined with an exponential-decay scheduler on this parameter to avoid ``over''-exploring when increasing workers. The detailed algorithm is presented in the appendix.

\subsection{Multi-Objective Bayesian Optimization}
\label{sec:method-multi-objective-bo}


The multi-objective Bayesian optimization (MoBO) algorithm we propose is presented in Algorithm~\ref{alg:mobo-algorithm}. It is inspired by the ParEGO~\cite{parego2006} algorithm which performs scalarization through the augmented Chebyshev function and the SMAC algorithm~\cite{smac2012} which uses a Random-Forest (RF) surrogate model with random-splits. A similar variant is already available in the SMAC3~\cite{smac3_2022} package.

\subsubsection{Normalization}
\label{sec:method-normalization}

{\it (Lines 8-9, Algorithm~\ref{alg:mobo-algorithm})} In the HPO setting, objectives of interest can have \emph{different scales} (e.g., accuracy, latency, FLOPS). Not only that, \emph{outliers} are also common when exploring a large hyperparameter search space (e.g., diverging metrics, numerical errors), see Figure~\ref{fig:scaler_identity_navalpropulsion} which display typical observations from MOO on the NavalPropulsion task from HPOBench~\cite{klein_tabular_2019,eggensperger_hpobench_2021} (other tasks of the considered benchmarks displayed similar behavior). 
On the one hand, the combination of both effects can make the HVI computation highly non-trivial. For example, the objective with the largest scale can weigh excessively and hide improvements in other objectives. Also, choosing a normalization and a reference point can become dependent on the experiment due to sensitivity to outliers such as in Figure~\ref{fig:scaler_identity_navalpropulsion} where most of the HVI between the top-right corner of the figure and the PF (red line) is empty.
On the other hand, the RF model used for BO can typically ``under-fit'' optimal configurations which often have the smallest squared error (e.g., closer to zero) already in the single-objective optimization setting. Previous work on single-objective optimization~\cite{egele_asynchronous_2022,smac3_2022}, usually apply some sort of ``log''-based transformation to mitigate the under-fitting problem. For instance, the MinMax-Log transformation $t^\text{MML}(y) = \log\left(({y - y_\text{min}})/{y_\text{max}} + \epsilon\right)$
is usually effective. However, we noticed that such transformation is also sensitive to outliers in the multi-objective case and while it was not really a problem for single-objective, it can transform a convex PF into a non-convex which makes the MOO problem much harder such as in Figure~\ref{fig:scaler_minmaxlog_navalpropulsion}.

For these reasons, as discussed in background section, scalarization can perform poorly on real-world problems. In particular, it is well-known that uniformly sampled weights do \emph{not} produce uniformly distributed solution points on the PF, especially for linear scalarization~\cite{das1998}. 
To resolve these problems, we require a transformation that would conserve the PF properties, focuses on areas of interest (i.e., close to the estimated PF), and is robust to outliers. A mapping of the independent objective distributions $P(y_i)$ to the uniform distribution can provide such properties. This also helps apply randomized scalarization without worrying about differences in objective scales and curvatures.

To do so, we compose the empirical cumulative distribution function (ECDF) $\hat{F} : \mathbb{R} \rightarrow [0,1]$ with the quantile function (i.e., inverse of CDF) of the uniform distribution $Q_{\mathcal{U}(0,1)} : [0,1] \rightarrow \mathbb{R}$ which actually is the identity function $Q_{\mathcal{U}(0,1)}(x) = x$. This allows us to have a better update of the surrogate model. The ECDF is estimated from the observed objectives $\texttt{Y}$. The quantile-uniform (QU) transformation is $t^\text{QU}(y) = \hat{F}(y)$. The result of this transformation is illustrated in Figure~\ref{fig:scaler_quantile-uniform_navalpropulsion}.
To show that the above transformation preserves Pareto optimality, i.e., $y$ is non-dominated in ${\bar{\cal Y}}$ if and only if $t^{QU}(y)$ is non-dominated in $t^{QU}({\bar{\cal Y}})$, consider the following claim.

\begin{claim}[Invariance under order-preserving transformations]
If $t : \mathbb{R}^{N_o} \rightarrow \mathbb{R}^{N_o}$
is a componentwise order-preserving transformation on $y \in {\bar{\cal Y}}$
such that
$t_i(y^{(j)}) < t_i(y^{(k)}) \Rightarrow t_i(y^{(j)}) < t_i(y^{(k)})$
and vice versa, then
$t(y^\star)$ is nondominated in $t({\bar{\cal Y}})$ if and only if
$y^\star$ is nondominated in ${\bar{\cal Y}}$.
\end{claim}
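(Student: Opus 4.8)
The plan is to reduce the statement to the observation that the dominance relation $\prec$ refers to the objective vectors only through the strict total order $<$ within each coordinate, so a map that preserves \emph{and reflects} that order in every coordinate transports $\prec$ faithfully between $\bar{\cY}$ and $t(\bar{\cY})$.

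First I would unwind the definitions. For $y^{(1)},y^{(2)}$ in the relevant attainable set, $y^{(1)} \prec y^{(2)}$ holds iff (i) for every $i$ it is \emph{not} the case that $y^{(2)}_i < y^{(1)}_i$, and (ii) there is some $j$ with $y^{(1)}_j < y^{(2)}_j$; here I have merely rewritten each non-strict inequality $y^{(1)}_i \le y^{(2)}_i$ as the negation of the reversed strict inequality, which is legitimate because $<$ is a total order on $\R$. The hypothesis on $t$ says precisely that $a < b \iff t_i(a) < t_i(b)$ for each coordinate $i$ and all $a,b$ appearing as coordinates of points of $\bar{\cY}$ (the ``and vice versa'' supplies the reverse implication, making this an equivalence rather than a one-sided preservation). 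Substituting this equivalence into (i) and (ii) yields at once $y^{(1)} \prec y^{(2)} \iff t(y^{(1)}) \prec t(y^{(2)})$.

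Second, I would lift this from pairs to the non-dominance property. By definition $y^\star$ is non-dominated in $\bar{\cY}$ iff there is no $y \in \bar{\cY}$ with $y \prec y^\star$. Every element of $t(\bar{\cY})$ is of the form $t(y)$ with $y \in \bar{\cY}$, so the pairwise equivalence shows that ``$\exists\, y \in \bar{\cY}:\, y \prec y^\star$'' is equivalent to ``$\exists\, z \in t(\bar{\cY}):\, z \prec t(y^\star)$''. Negating both sides gives that $y^\star$ is non-dominated in $\bar{\cY}$ iff $t(y^\star)$ is non-dominated in $t(\bar{\cY})$, which is the claim. Note that $t$ need not be injective as a map on all of $\R^{\No}$: only surjectivity onto its image $t(\bar{\cY})$ is used, and that is automatic.

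The one place that requires care — and the real obstacle when applying the claim to $t^{\text{QU}}$ — is checking the hypothesis that each component is strictly order-preserving. The empirical CDF $\hat F$ is a nondecreasing step function, hence not strictly increasing on all of $\R$. However, the attainable set that matters here is the finite set of observed objective vectors $\texttt{Y}$ from which $\hat F$ is built, and on the distinct observed values in each coordinate $\hat F$ \emph{is} strictly increasing, while coordinate ties are sent to equal values — which is exactly what componentwise order preservation permits. Accordingly I would prove the claim as stated (for a general componentwise strictly-order-preserving $t$) by the two steps above, and then add a short remark that $t^{\text{QU}}$, regarded as a transformation of the finite observed set, satisfies the hypothesis; the remaining verifications are routine.
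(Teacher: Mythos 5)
Your proposal is correct and follows essentially the same route as the paper's proof: both reduce the claim to the fact that componentwise order preservation (together with its converse) transports the dominance relation $\prec$ between $\bar{\cY}$ and $t(\bar{\cY})$, and then pass from pairwise dominance to non-dominance. Your write-up is in fact slightly more careful than the paper's, since you make explicit where the ``vice versa'' direction is needed to transport the non-strict inequalities (via negation of the reversed strict inequality) and you flag the subtlety that the ECDF is only strictly increasing on the observed values, whereas the paper handles both directions by a somewhat terser contradiction argument.
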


\begin{proof}
Suppose that $t$ is order-preserving, as defined above.

($\Rightarrow$):
Assume that $t(y^\star)$ is Pareto optimal in $t({\bar{\cal Y}})$,
which implies that
$y^{(t,j)} \not\prec t(y^\star)$
for all $y^{(t,j)} \in t({\bar{\cal Y}})$.
For contradiction, suppose that $y^{(j)} \prec y^\star$ for some
$y^{(j)} \in {\bar{\cal Y}}$.
Then by definition $y^{(j)} \leq y^\star$ and $y^{(j)}_i < y^\star_k$
for at least one $i \in \{1, \ldots, o\}$.
Since $t$ is order-preserving, this would imply that
$t_i(y^{(j)}) < t_i(y^\star)$
and componentwise $t(y^{(j)}) \leq t(y^\star)$.
Since $t(y^{(j)}) \in t({\bar{\cal Y}})$, this is a contradiction.

($\Leftarrow$):
Assume that $y^\star$ is Pareto optimal in ${\bar{\cal Y}}$,
which implies that $y^{(j)} \not\prec y^\star$
for all $y^{(j)} \in {\bar{\cal Y}})$.
Each $y^{(t,j)}\in t({\bar{\cal X}})$ satisfies
$y^{(t,j)} = t(y^{(j)})$ for some point in the untransformed
set ${\bar{\cal X}}$.
But from the assumption, $y^{(j)} \not\prec y^\star$.
So by similar logic as above, $t(y^{(j)}) \not\prec t(y^\star)$.
\end{proof}

\begin{corollary}[Quantile Uniform Transformation]
The quantile transformation $t^{QU}$ is an ECDF.
By definition, an ECDF is monotone increasing (strictly monotone when invertible),
so it is immediately order-preserving.
So from the claim, we immediately conclude that $t^{QU}$ preserves the
Pareto set.
\end{corollary}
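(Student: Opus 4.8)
The corollary is an instance of the Claim with $t = t^{QU}$, so the plan is to verify that $t^{QU}$ satisfies the Claim's hypothesis — being a componentwise order-preserving transformation on $\bar{\cal Y}$ — and then invoke the Claim. First I would make the componentwise structure explicit: $t^{QU}(y) = (\hat{F}_1(y_1), \ldots, \hat{F}_{\No}(y_{\No}))$, where $\hat{F}_i$ is the ECDF of the $i$-th objective built from the observed archive $\texttt{Y} = \{Y^{(1)}, \ldots, Y^{(n)}\}$, i.e. $\hat{F}_i(a) = \tfrac1n \#\{k : Y^{(k)}_i \le a\}$. It therefore suffices to show that each scalar map $\hat{F}_i:\mathbb{R}\to[0,1]$ is order-preserving in the sense the Claim needs (weak monotonicity to transport componentwise inequalities, and strict monotonicity to transport the strict component of a domination).

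The weak half is immediate: $a \le b$ gives $\{k : Y^{(k)}_i \le a\} \subseteq \{k : Y^{(k)}_i \le b\}$, hence $\hat{F}_i(a) \le \hat{F}_i(b)$. For the strict half, I would use the key observation that if $a < b$ are two \emph{distinct values occurring among the observations} of objective $i$, then the index attaining $b$ lies in $\{k : Y^{(k)}_i \le b\}$ but not in $\{k : Y^{(k)}_i \le a\}$, so $\hat{F}_i(b) - \hat{F}_i(a) \ge \tfrac1n > 0$; thus $\hat{F}_i$ is strictly increasing on the observed support. Restricting attention to the finite archive — which is exactly the set to which $t^{QU}$ is applied and on which the hypervolume indicator is evaluated — the hypothesis of the Claim holds, and applying it gives that $y^\star$ is non-dominated (within the archive) if and only if $t^{QU}(y^\star)$ is non-dominated in $t^{QU}(\bar{\cal Y})$, which is the asserted Pareto-set invariance.

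The step I expect to be the real obstacle is precisely this strict-versus-weak monotonicity issue: a raw ECDF is only a non-decreasing step function, so on a genuinely continuous $\bar{\cal Y}$ two distinct objective vectors can collapse to the same transformed vector, and this can even manufacture a spurious domination — e.g. if $\hat{F}_i$ is flat on an interval containing $a_i > b_i$ while $a$ beats $b$ in some other, strictly mapped component, then $t^{QU}(a)\prec t^{QU}(b)$ although $a$ and $b$ were originally incomparable. So the argument cannot rest on weak monotonicity alone; it genuinely needs the $\tfrac1n$-gap, which is available exactly because the ECDF is built from, and applied to, the same finite set of evaluated points. If one wanted the statement literally for all of continuous $\bar{\cal Y}$, I would instead prove it for a strictly increasing interpolant of the ECDF (e.g. its midpoint-connected version), which agrees with $t^{QU}$ on the archive and to which the Claim applies without the caveat.
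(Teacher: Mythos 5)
Your proposal follows the same route as the paper: exhibit $t^{QU}$ as a componentwise map whose components are ECDFs, check that each component is order-preserving, and invoke the Claim. The paper's entire justification is the one-liner embedded in the corollary statement (``an ECDF is monotone increasing \dots so it is immediately order-preserving''), so structurally you have reproduced it. What you add --- and what the paper elides --- is the weak-versus-strict monotonicity issue: an ECDF is only a non-decreasing step function, and weak monotonicity alone does not satisfy the Claim's hypothesis, since the strict component of a dominance relation must also be transported; as you correctly observe, a flat stretch of $\hat{F}_i$ can collapse distinct values and even manufacture a spurious dominance between previously incomparable points. Your repair --- that $\hat{F}_i$ jumps by at least $1/n$ between distinct observed values of objective $i$, so strict monotonicity holds on the finite archive to which the transformation is actually applied --- is exactly what is needed, whereas the paper's parenthetical ``strictly monotone when invertible'' does not supply it, because the ECDF built in Algorithm~1 is generically not invertible. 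So your proposal is correct and, if anything, more careful than the paper's own argument.
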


\begin{figure}[!b]
    \centering
    \begin{subfigure}{0.49\columnwidth}
        \centering
        \includegraphics[width=\textwidth]{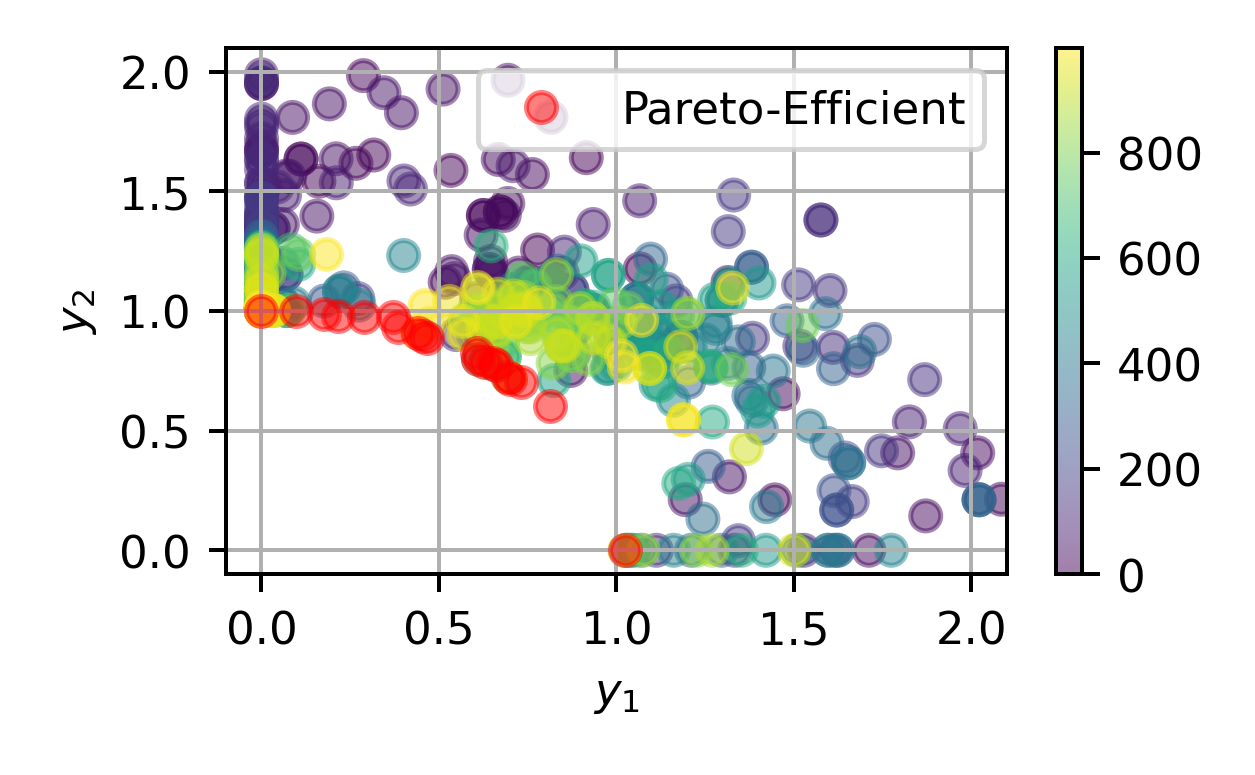}
        \caption{No Penalty (NP)}
        \label{fig:example-dtlz2-penalty-a}
    \end{subfigure}
    \begin{subfigure}{0.49\columnwidth}
        \centering
        \includegraphics[width=\textwidth]{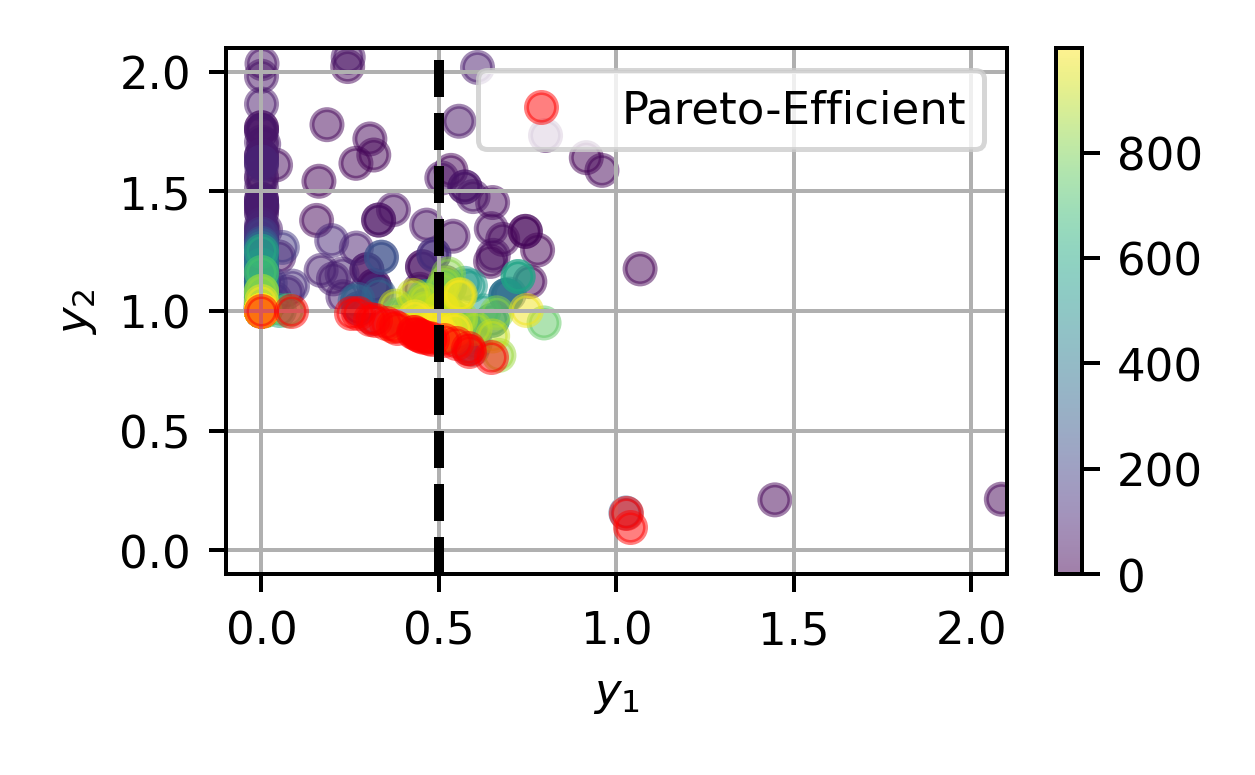}
        \caption{Penalty $y_1 < 0.5$}
        \label{fig:example-dtlz2-penalty-b}
    \end{subfigure}
    \caption{Example effect of the penalty on the DTLZ 2 benchmark~\cite{deb2005scalable}. The color of points indicates the number of evaluations of the function.}
    \label{fig:example-dtlz2-penalty}
\end{figure}

\subsubsection{Penalty}

{\it(Lines 10-12, Algorithm~\ref{alg:mobo-algorithm})} Even though we are interested in exploring a diverse set of solutions on the PF, there are often minimal requirements on some objectives.
For example, if we have a baseline binary classifier with an error rate of 15\%, likely, we would not consider any solution configuration with an error rate greater than 20\%.
In the generic multi-objective Problem~\ref{eq:generic-moo}, there is no way to specify that some solutions are fundamentally less interesting than others. Therefore, we propose to impose upper bounds on the objective ranges. Since these constraints could be violated (i.e., obtaining some results out of bounds), we can consider these objective bounds as {\sl soft} constraints \cite{ledigabel2015}.

In the single-objective literature, nonlinear constraints are often handled via a {\sl penalty function}, such as an augmented Lagrangian.
This technique generalizes to the multi-objective case, where the penalty must be applied to all objectives (not only the objective that violates its upper bound) \cite{cocchiandlapucci2020}.
In our case where the constraint functions are also black-box functions, the {\sl progressive barrier} penalty function has been shown to work well in the single-objective case \cite{audet2009}.
In other multi-objective black-box software, the progressive boundary approach has been successfully implemented in the multi-objective case and shown to be effective in handling arbitrary black-box constraints \cite{chang2023designing}.

To enforce upper bounds on objective ranges, we apply a progressive barrier penalty to all objectives whenever one or more objectives violate their upper bounds.
The penalty is calculated as the sum of all constraint violations multiplied by a penalty strength factor $\gamma$.
See the exact calculation in Algorithm~\ref{alg:mobo-algorithm}.
Note that, thanks to the QU transformation, it is appropriate to choose a problem-independent penalty constant of $\gamma=2$, which is slightly greater than the normalized objective magnitudes.
This penalty discourages the optimizer from wasting resources further refining uninteresting trade-offs, which fail to meet the minimal requirements. An example of the effect of such a penalty is provided in Figure~\ref{fig:example-dtlz2-penalty}. In the case where no penalty is applied (Figure~\ref{fig:example-dtlz2-penalty-a}) the full PF of the problem is explored until the end (yellow points). But, when applying the penalty to enforce $y_1 < 0.5$ (Figure~\ref{fig:example-dtlz2-penalty-b}) we observe that most of the PF where $y_1 > 0.5$ is rarely explored.

We stress two limitations of such a penalty. First, when the true PF is not known using a penalty can be ineffective. Indeed, if the penalty does not impact the PF then we observed that the performance of the MOO algorithms is similar or worse than not having the penalty. Second, this penalty strategy is particularly effective for QU normalization. However, when using the same scheme with other transformations (Identity and MinMax-Log) it was very much ineffective or required harder tuning of the $\gamma$ parameter.

\subsubsection{Scalarization}
\label{sec:method-scalarization}

{\it (Lines 13-14, Algorithm~\ref{alg:mobo-algorithm})} We focus on scalarization-based MoBO and as it is not clear which function is better we consider several functions from the literature \cite{chugh2020scalarizing}: weighted-sum or \emph{linear} (L), the \emph{Chebyshev} (CH), and the \emph{penalty-boundary intersection} (PBI). The detailed form of these functions is provided in the appendix. Then, to enhance the diversity of the estimated PF, in each BO iteration we decide to re-sample weights uniformly from the unit-simplex $\Delta N_o$ \cite[Remark 1.3]{pinelis2019order}
$ w_i = {\log(1 - \tilde{w}_i)}/({\sum_{j=1}^{N_o} \log(1 - \tilde{w}_j)})$ with $\tilde{w}_i \sim \mathcal{U}(0,1)$.


\begin{algorithm2e}[!t]
\small
\SetInd{0.25em}{0.5em}
\SetAlgoLined

\SetKwInOut{Input}{Inputs}
\SetKwInOut{Output}{Output}
\SetKwProg{Fn}{Function}{ is}{end}

\SetKwFunction{suggest}{suggest}
\SetKwFunction{observe}{observe}

\SetKwFor{For}{for}{do}{end}

\Input{$\texttt{Ni}$: number of initial configurations, $\texttt{No}$: number of objectives, $\texttt{UB}$: objectives upper-bounds.}

$\texttt{X}, \texttt{Y} \gets $ New empty arrays of configurations/objectives \;
$\texttt{m} \gets $ new Random-Forest model \;
$\gamma \gets $ Initialize penalty strength $=2$\;

~\\

\Fn{\observe{$\texttt{X\_new},\texttt{Y\_new}$}}{
    {\color{orange}\tcc{Updates model with observations}}
    $\texttt{X} \gets $ Concatenate $\texttt{X}$ with $\texttt{X\_new}$ \;
    $\texttt{Y} \gets $ Concatenate $\texttt{Y}$ with $\texttt{Y\_new}$ \;
    {\color{blue}\tcc{Quantile-Uniform Normalization}}
    $\hat{F} \gets $ Estimate empirical CDF from $\texttt{Y}$\;
    $\texttt{Yu} \gets \hat{F}(\texttt{Y}) $ \;
    {\color{blue}\tcc{Apply Penalty}}
    $\texttt{UBu} \gets \hat{F}(\texttt{UB}) $ \;
    $\texttt{p} \gets \gamma \sum_{i \in [1;\texttt{No}]} \max(\texttt{Yu}[:,i] - \texttt{UBu}[i], 0) $ \;
    $\texttt{Yp} \gets  \texttt{Yu} + \texttt{p}$\;
    {\color{blue}\tcc{Scalarization and Update}}
    $\texttt{w} \gets $ Sample array of $\texttt{No}$ weights from $\Delta \texttt{No}$\;
    $\texttt{Ys} \gets $ Apply $s_\texttt{w}(.)$ on all elements of $\texttt{Yp}$ \;
    Replace failed evaluations in $\texttt{Ys}$ with $\max(\texttt{Ys})$ \;
    Update model $\texttt{m}$ with $(\texttt{X},\texttt{Ys})$ observations \;
}

~\\

\Fn{\suggest{$\kappa$}}{
    {\color{orange}\tcc{Minimizes lower-confidence bound}}
    \eIf{Length of \texttt{X} $ < \texttt{Ni}$}{
        $\texttt{x} \gets$ Sample random configuration \;
    }{
        $\texttt{x} \gets \argmin_x \texttt{m}_\mu(x) - \kappa \cdot \texttt{m}_\sigma(x)$ \;
    }
    \Return $\texttt{x}$ \;
}

\caption{Multi-Objective Bayesian Optimizer}
\label{alg:mobo-algorithm}
\end{algorithm2e}

\section{Results}
\label{sec:results}

In this section we present our experimental results on D-MoBO. First, we evaluate which combination of scalarization function and normalization is the most effective. Then, we evaluate the impact of the penalty function. Finally, we evaluate the gain in performance of D-MoBO when scaling parallel workers.

\subsection{Evaluating Scalarization and Normalization}
\label{sec:results-hpobench}

\begin{figure}[!h]
    \centering
    \includegraphics[width=\columnwidth]{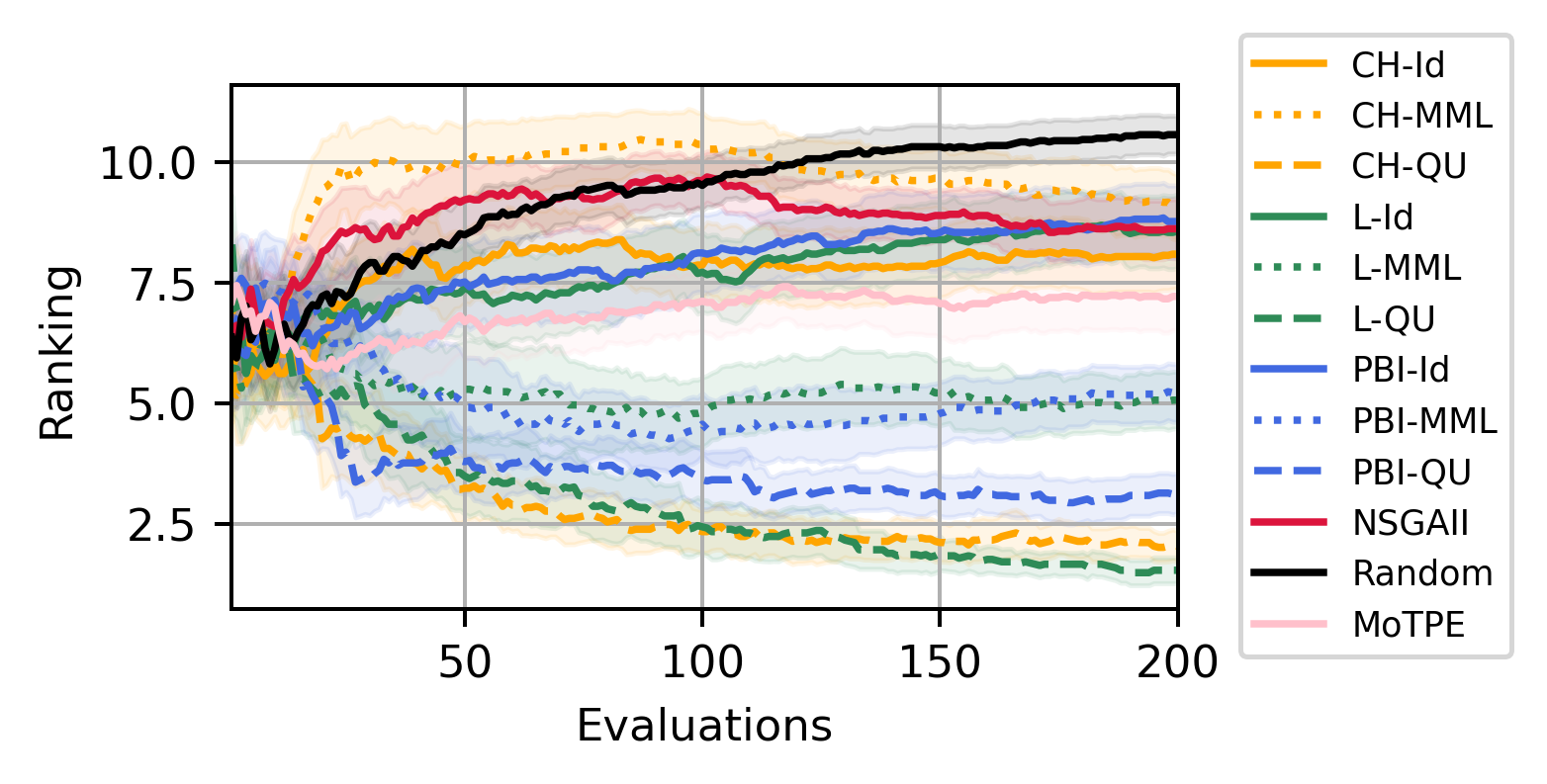}
    \caption{Ranking of scalarization functions and objective normalization combinations on the HPOBench tabular tasks (i.e., 40 experiments per curve).}
    \label{fig:hpobench_average_ranking}
\end{figure}

In this section, we present the experimental results which led us to choose the Linear scalarization with QU normalization. For this, we select four HPO benchmarks tasks from HPOBench~\cite{eggensperger_hpobench_2021,klein_tabular_2019}: NavalPropulsion, ParkinsonsTelemonitoring, ProteinStructure, SliceLocalization. The 2 objectives are the validation error and the training time.
These benchmarks allow for fast evaluations of HPO methods as they simulate learning workflow evaluations through a pre-computed database. 
Also, in this experiment, we limit ourselves to studying the behavior of Algorithm~\ref{alg:mobo-algorithm} for 200 sequential iterations. 
For the comparison, we choose 3 scalarization functions: Linear (L), Chebyshev (CH), and PBI. And, 3 objective normalization strategies: Identity (Id), MinMax-Log (MML), and Quantile-Uniform (QU). As we want to study the synergy between scalarization and objective normalization each combination (9 in total) is evaluated. We add two baselines for the comparison: random (Random), NSGAII, and MoTPE.
Each configuration is repeated 10 times with fresh random states. The quality metric is HVI.
In Figure~\ref{fig:hpobench_average_ranking}, we present the averaged ranks over HVI curves across all tasks and repetitions (i.e., 40 experiments). The average HVI curves are given for transparency in the appendix. Scalarization functions have different colors and objective normalization have different line styles. 
The transparent bandwidth around the curves represents a confidence interval of 95\% confidence of the mean-value estimation s (i.e., 1.96 standard error).

From the ranking curves (Figure~\ref{fig:hpobench_average_ranking}) (lower ranks are better), it is clear that {\bf QU normalization is the best} as strategies based on QU (dashed lines) have curves which dominate all other methods independently of the scalarization function.
Then for the scalarization functions, it appears that PBI (blue dashed line) is underperforming compared to L (green dashed line) and CH (orange dashed line). As L is winning we keep it as the default scalarization function. However, because L and CH are statistically different (i.e., overlapping standard errors) and also because we know that CH would be more robust in the case of non-convex PF we keep it as an option for future work. Indeed, CH may be harder to optimize for MoBO and could require a few iterations with fixed weights instead of always re-sampling weight vectors.
Finally, from 100 evaluations it seems like NSGAII manages to slowly improve its ranking.

\subsection{Exploring the Effect of Penalty}
\label{sec:results-combo-penalty}

\begin{figure}[!b]
    \centering
    \includegraphics[width=0.9\columnwidth]{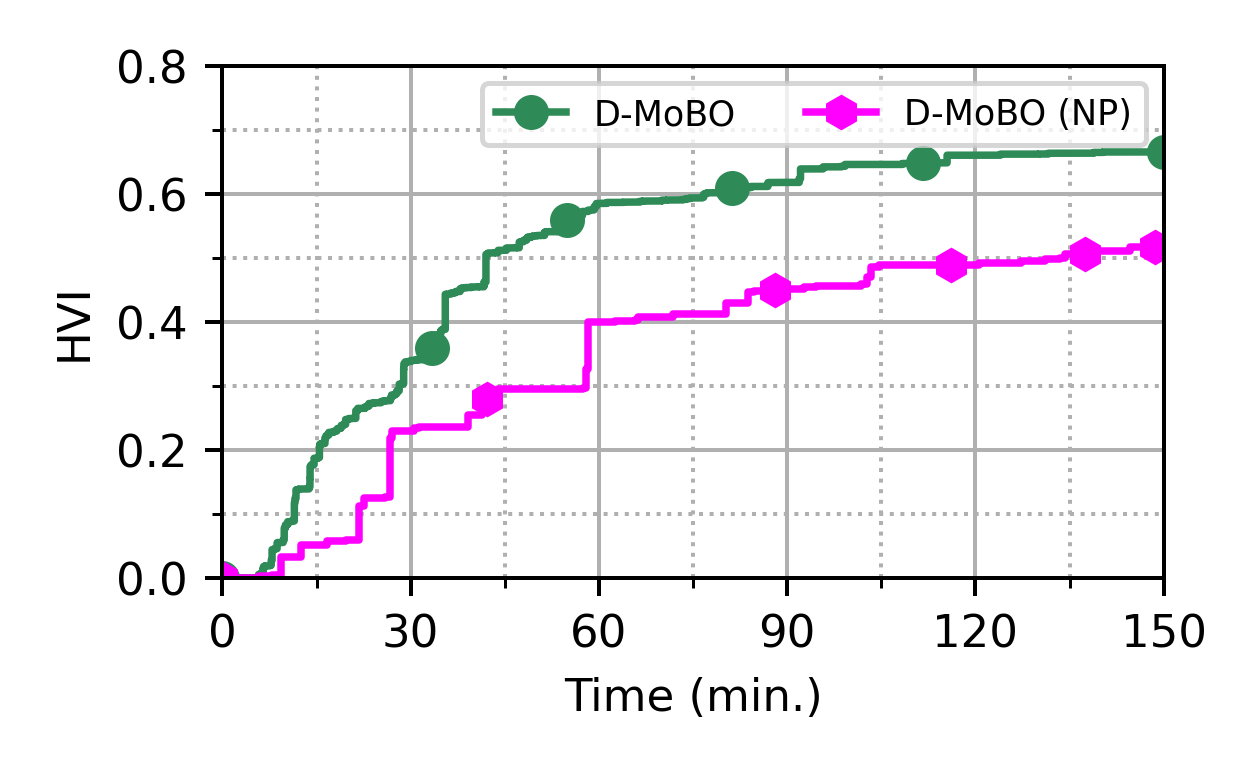}
    \caption{Observing the effect of penalty for D-MoBO on the Combo benchmark. The penalty is set to $R^2 > 0.85$. D-MoBO (green), is the version implementing the penalty while D-MoBO (NP) (pink) is without penalty. HVI is computed using the reference for $R^2=0.85$.}
    \label{fig:hypervolume-vs-time-polaris-combo-d-mobo-constraint-160.png}
\end{figure}

In this section, we show the effect of the penalty to explore only ``useful'' hyperparameter configurations. For this, we use a different benchmark more suitable for the HPC setting. We choose the Combo problem from the ECP-Candle benchmark which was introduced in previous works for AutoML on HPC~\cite{balaprakash_scalable_2019,egele_agebo-tabular_2020,egele_asynchronous_2022}. This benchmark contains 22 hyperparameters and performs a regression task on the growth rate of Cancer cells given a treatment. The multi-objective problem is to minimize $y_1 := 1-R^2$ (i.e., maximizing $R^2$ the coefficient of determination on validation data), minimize $y_2$ the latency (i.e., inference time), and minimize the number of parameters (i.e., model size). Each model can train for a maximum of 50 epochs and 30 minutes. Experiments are run on a maximum of 640 GPUs in parallel (1 evaluation per GPU) for 2.5 hours (i.e., corresponds to only 5 sequential evaluations of 30 minutes). Only 1 repetition is done per experiment due to their cost and the random seed is fixed to 42 for all experiments. The penalty upper-bound is set to $y_1 < 0.15$ as the baseline model from Combo reaches $y_1=0.13$ after 100 epochs. 
In Figure~\ref{fig:hypervolume-vs-time-polaris-combo-d-mobo-constraint-160.png} we present the HVI vs time curve for D-MoBO (with penalty), and D-MoBO (NP) (without penalty). The reference point to compute the HVI is set $y_\text{ref} = (0.15, \max(y_2), \max(y_3))$. Such a penalty represents the practical consideration where any model with a predictive performance less than a threshold is unusable (minimum requirement). It is clear from the results that applying the penalty helps the algorithm focus on the solution set of interest as the HVI curve of D-MoBO increases much faster than D-MoBO (NP). Similar results with fewer parallel workers (40 and 160) are observed and presented in the appendix. Also, in Table~\ref{tab:partial-table} we provide additional quantitative metrics. Mainly, when comparing D-MoBO and D-MoBO (NP) we notice that without the penalty very few models are reaching the threshold of required accuracy. For example, the penalty helps, for the same computational budget, to move from 177 to 3,454 models with $y_1 < 0.15$.

Then, we also compare D-MoBO with other noticeable MOO algorithms of the literature: NSGAII, MoTPE (both from Optuna with default parameters), and Random search. While Random search is mostly used as a sanity check of HPO algorithms in general, NSGAII is known to be a strong performer in the MOO field and MoTPE is known to have faster convergence. We used the vanilla version of these algorithms and therefore they do not enforce any bound on the objectives. In Figure~\ref{fig:hypervolume-vs-time-polaris-combo-all-160} we can observe that D-MoBO has the advantage in early iterations but NSGAII is bridging the gap in late iterations. This effect is confirmed with different numbers of workers, see the appendix. We can also see that MoTPE stagnates quickly to barely outperforms the Random search but it started slightly better than NSGAII. MoTPE was expected to not scale properly.

\begin{figure}[!h]
    \centering
    \includegraphics[width=0.9\columnwidth]{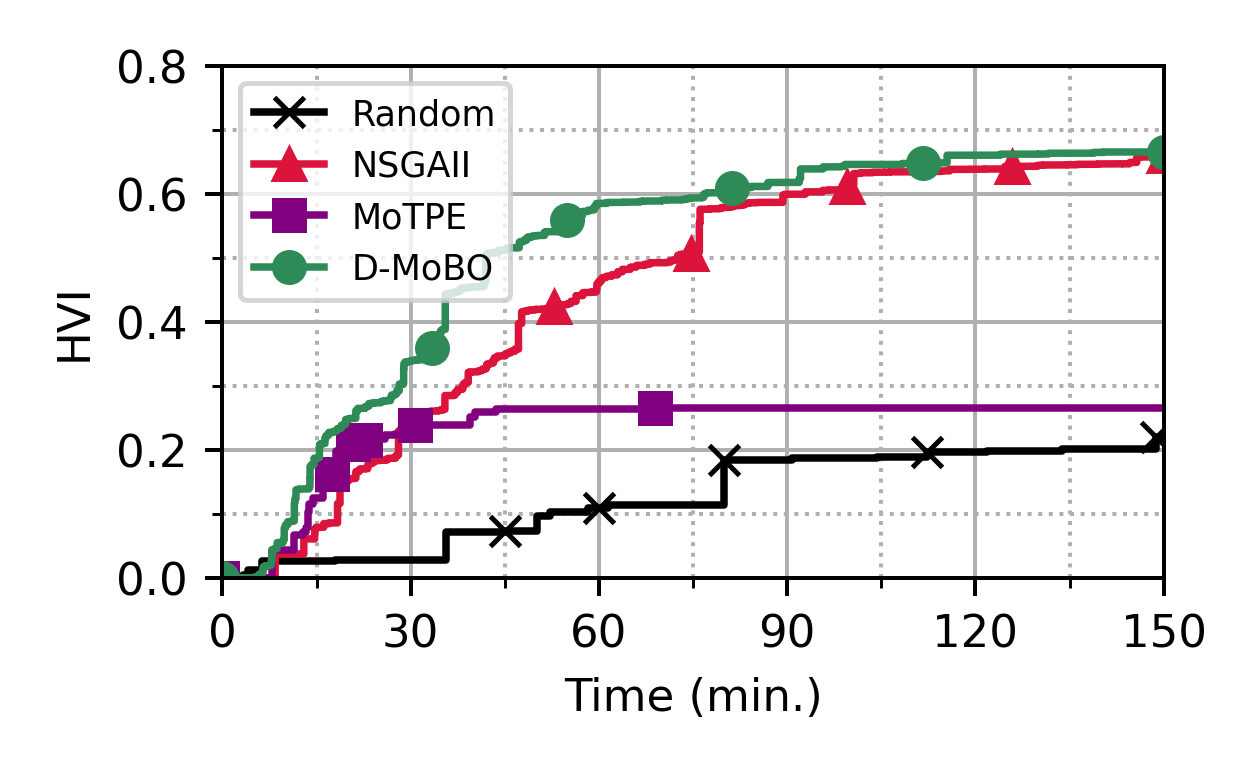}
    \caption{Comparing D-MoBO (with penalty) against other optimizers on the Combo benchmark.}
    \label{fig:hypervolume-vs-time-polaris-combo-all-160}
\end{figure}

\subsection{Boosting Performance with Parallelism}
\label{sec:results-combo-hpc}

In this section, we show the gain in solution quality and convergence speed when increasing the number of parallel workers for D-MoBO. For this, we follow the same experimental setting as the previous section. In addition to the experiments with 640 GPUs, we also run D-MoBO (including the penalty) with 160 and 40 GPUs (i.e., varying the workers by a factor of 4). In Figure~\ref{fig:hypervolume-vs-time-polaris-combo-d-mobo-scaling.png}, we see that increasing workers improves both convergence speed and solution quality. For example, looking at the convergence speed, with 640 workers in 30 mins we reach the final solution of 40 workers in 2.5  (i.e., 5x speed-up). Then, for the solution quality, if we look at any point in time the experiments with more workers have larger HVI. To aggregate these, we provide, in Table~\ref{tab:partial-table}, the HVI indicator at the end of the experiment (HVI, a metric of solution quality) and the area under the curve (AUC, a metric of convergence speed) for each scale of workers. The AUC is commonly used in the ``any-time'' learning setting where the goal is to find the learning machines with faster convergence~\cite{autdl_challenge_2019}. 

Similar scaling experiments were performed for other MOO algorithms: NSGAII, MoTPE, and Random search. We notice that NSGAII (the strongest competitor in our study) gains similarly in performance when increasing the number of parallel workers. However, for MoTPE and Random search, the gain in performance is minor. Detailed results are provided in the appendix. Finally, we can see in Table~\ref{tab:partial-table} that D-MoBO consistently outperformed other competitors at different scales. However, the difference in the final solution against NSGAII becomes less significant when the computational budget increases.

\begin{figure}
    \centering
    \includegraphics[width=0.9\columnwidth]{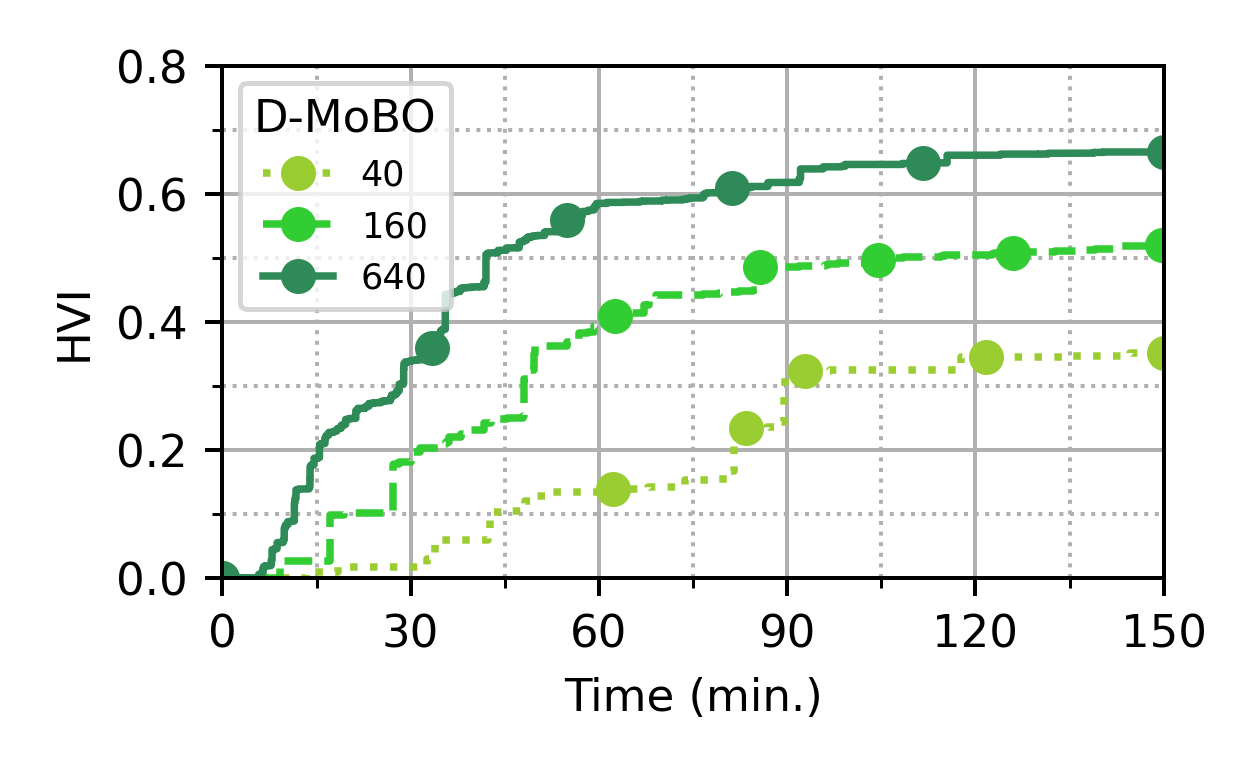}
    \caption{Observing the effect of increased parallel workers in D-MoBO. From 40, 160 to 640 parallel GPUs. The more workers the better is the solution and the convergence has 640  strongly dominates other settings.}
    \label{fig:hypervolume-vs-time-polaris-combo-d-mobo-scaling.png}
\end{figure}

\begin{table}
\centering
\resizebox{\columnwidth}{!}{%
\begin{tabular}{l|ccc|ccc|ccc}
    & \multicolumn{3}{c|}{\textbf{40 Workers}} & \multicolumn{3}{c|}{\textbf{160 Workers}} & \multicolumn{3}{c}{\textbf{640 Workers}} \\
    & \#VC   & HVI    & AUC      & \#VC  & HVI    & AUC    & \#VC   & HVI    & AUC   \\ \hline
\textbf{Random}     & 4      & 0.16  & 0.09  & 11    & 0.14  & 0.08  & 30     & 0.21  & 0.13 \\
\textbf{MoTPE}      & 1      & 0.08  & 0.07  & 21    & 0.14  & 0.11  & 125    & 0.26  & 0.23 \\
\textbf{NSGAII}     & 14     & 0.28  & 0.12  & 183   & 0.5  & 0.27  & 2,314  & \textbf{0.66}  & 0.45 \\
\textbf{D-MoBO} &
\textbf{109} &
\textbf{0.34} &
\textbf{0.19} &
\textbf{497} &
\text{0.51} &
\textbf{0.35} &
\textbf{3,454} &
\textbf{0.66} &
\textbf{0.51} \\
\textbf{D-MoBO NP} & 1      & 0.1  & 0.04  & 37    & 0.35  & 0.22  & 177    & 0.51  & 0.34
\end{tabular}%
}
\caption{Comparing optimizers for different numbers of workers. \#VC: the number of evaluations with valid objective bounds and AUC: area under the HVI curve. HVI and AUC are rounded to the closest value.}
\label{tab:partial-table}
\end{table}

\section{Discussion}

In this study, it was shown that D-MoBO is a promising approach for MOHPO. Its performance is explained by (1) its QU normalization, (2) its soft penalty, and (3) its capacity to benefit from an increase in parallel workers.

However, we also observed that the difference between NSGAII and D-MoBO becomes less significant with an increased number of evaluations. We note that only NSGAII compatible with mixed-integers was tested but not without constraints which would make a better comparison. Lastly, the penalty can be hard to set without prior information about the problem (i.e., hard to automate).

Contrary to some previous benchmarks such as YAHPO-Gym~\cite[Figure 4]{pfisterer2022yahpo} we find that MOHPO based on BO can significantly outperforms other competitors. In addition, we find that random search does not gain significantly from an increase in parallel computations even if a lot more evaluations are completed.

In future works, the policy to sample trade-off weights should be optimized as uniform sampling is simple but under-optimized and certainly not adapted to all considered scalarization functions. 
Also, the scalarization function study should be refined by testing more MOHPO problems to evaluate how frequent are non-convex PF. For this, we wish to use JAHSBench-201~\cite{bansal2022jahs} and YAHPO-Gym which are two promising benchmarks for MOHPO currently being refined.

\section{Acknowledgment}
This material is based upon work supported by the U.S.\ Department of Energy 
(DOE), Office of Science, Office of Advanced Scientific Computing Research, under
Contract DE-AC02-06CH11357. This research used resources of the Argonne 
Leadership Computing Facility, which is a DOE Office of Science User Facility. 
This material is based upon work supported by ANR Chair of Artificial Intelligence HUMANIA ANR-19-CHIA-0022 and TAILOR EU Horizon 2020 grant 952215.

\bibliography{dh-moo}

\newpage

\appendix

\section{Decentralized Bayesian Optimization Algorithm}

This section provides a detailed description of the algorithm used for parallel Bayesian optimization. This asynchronous decentralized scheme we use was proposed by~\cite{egele2022asynchronous} and is detailed in Algorithm~\ref{alg:dbo-algorithm}. This scheme allows for efficient resource utilization with an \emph{overhead for new  ``suggest'' that is not impacted by the number of parallel workers} and exploration-exploitation trade-off through a periodic exponential decay scheduler. Indeed, exploration can become excessive when increasing the number of parallel resources. In-depth, each process starts by initializing the local random states with $\texttt{seed\_global}$ (line 1). From this state, the initial local exploration-exploitation trade-off is sampled from an exponential distribution of mean parameter $\frac{1}{\kappa}$ (line 2). Then, the random state is updated with a new $\texttt{seed\_local}$ (line 3-5). The optimization agent and share memory are initialized (lines 5-6).
After initialization, the optimization loop is entered. Each loop iteration starts by updating $\kappa_t$ following the periodic decay scheduler (line 9). Next, the optimizer is queried for a new configuration $\texttt{x}$ (line 10) which is used to evaluate the black-box function $\texttt{func}$ (line 11). Once, the evaluation is completed, new observations from other processes working with the same shared memory $\texttt{storage}$ are retrieved (line 12), the latest observation is added to the shared memory (line 13), and all new observations are concatenated together $\texttt{X},\texttt{Y}$ (lines 14-15). The optimizer is finally updated with this new batch of observations (line 17).

\begin{algorithm2e}[!t]
\small
\SetInd{0.25em}{0.5em}
\SetAlgoLined
\SetKwInOut{Input}{Inputs}\SetKwInOut{Output}{Output}
\SetKwFunction{Tell}{observe}
\SetKwFunction{Ask}{suggest}
\SetKwFunction{Exp}{Exp}
\SetKwFunction{writef}{write}
\SetKwFunction{readf}{read}

\SetKwFor{For}{for}{do}{end}

\Input{$\texttt{func}$: black-box function, $\kappa$: $\lcb$ hyperparameter, $\texttt{R}$: rank of the process, $\texttt{Np}$: number of processes, $\texttt{seed\_global}$: global random seed, $\texttt{T}$: period of the exponential decay, $\lambda$: decay rate of the exponential decay}
\Output{$\texttt{storage}$ all evaluated configurations}
{\color{orange}\tcc{Initialization}}
Initialize the random generator with $\texttt{seed\_global}$ \;
$\kappa_0 \gets $ Sample new local trade-off from $\texttt{Exp}(\frac{1}{\kappa})$ \; 
$\texttt{seed\_local} \gets $ Get index $\texttt{R}$ of $\texttt{Np}$ sampled integers \;
Initialize the random generator with $\texttt{seed\_local}$ \;
$\texttt{optimizer} \gets $ New Bayesian Optimizer \;
$\texttt{storage} \gets $ New shared memory \;
$\texttt{t} \gets 0$\;
{\color{orange}\tcc{Optimization Loop}}
\While{not done}{
    {\color{blue}\tcc{Apply exponential decay}}
    $\kappa_t \leftarrow \kappa_0 \cdot \exp(-\lambda \cdot (\texttt{t} \bmod \texttt{T}))$ \;
    {\color{blue}\tcc{Query and Evaluate Configuration}}
    $\texttt{x} \gets \texttt{optimizer}$.\Ask{$\kappa_t$} \;
    $\texttt{y} \gets$ Evaluate $\texttt{func}(\texttt{x})$ \;
    {\color{blue}\tcc{Write/Read From Shared-Memory}}
    $\texttt{X}, \texttt{Y} \gets$ Read new observations from $\texttt{storage}$ \;
    Write $(\texttt{x}, \texttt{y})$ to $\texttt{storage}$ \;
    $\texttt{X} \gets $ Concatenate $\texttt{X}$ with $[\texttt{x}]$ \;
    $\texttt{Y} \gets $ Concatenate $\texttt{Y}$ with $[\texttt{y}]$ \;
    $\texttt{t} \leftarrow \texttt{t} + 1$ \;
    {\color{blue}\tcc{Update the Model}}
    $\texttt{optimizer}.$\Tell{$\texttt{X},\texttt{Y}$} \;
}
\Return \texttt{storage} \;

\caption{Decentralized Bayesian Optimization}
\label{alg:dbo-algorithm}
\end{algorithm2e}

\section{Details on Scalarization Functions}
\label{appendix:scalarization-function}

In this section, we provide more details about the set of scalarization functions we considered. Let $w$ be a weight vector of dimension $N_o$ such that $|w| = 1$ and $w_i \geq 0$, let $y \in \cY$ be an objective vector also of dimension $N_o$, and let $z$ be the \emph{utopia point} given by the objectives minimized independently.

The weighted-sum or \emph{linear} (L) scalarization  is:
\begin{equation}
    \label{eq:linear-scalarization}
    s^{L}_w(y) = \sum_{i=1}^{N_o} w_i y_i
\end{equation}
The \emph{Chebyshev} (CH) scalarization is:
\begin{equation}
    \label{eq:chebyshev-scalarization}
    s^{CH}_{w,z}(y) = \max_{i\in[1;N_o]} w_i|y_i - z_i|
\end{equation}
The \emph{penalty-boundary intersection} (PBI) scalarization is:
\begin{equation}
    \label{eq:pbi-scalarization}
    s^{PBI}_{w,z}(y) = d_1(z - y; w) + \theta \cdot d_2(z - y; w)
\end{equation}
where $d_1(y; w) = |y^\top w / \|w\||, d_2(y; w) = \|y - d_1 w / \|w\|\|$ and $\theta \in \mathbb{R}$ a parameter (default to 5).

\section{Additional Results on HPOBench}

In this section, we provide the HVI curves of the experiments from which we derived the ranking. Figure~\ref{fig:hpobench_average_hypervolume} presents these results. The same conclusion as Figure~\ref{fig:hpobench_average_ranking} can be derived. For example, we clearly see that NSGAII is progressively closing the gap with other competitors. We can also see that the (PBI) function is more robust than (CH) for MML normalization.

\begin{figure}[!h]
    \centering
    \includegraphics[width=\columnwidth]{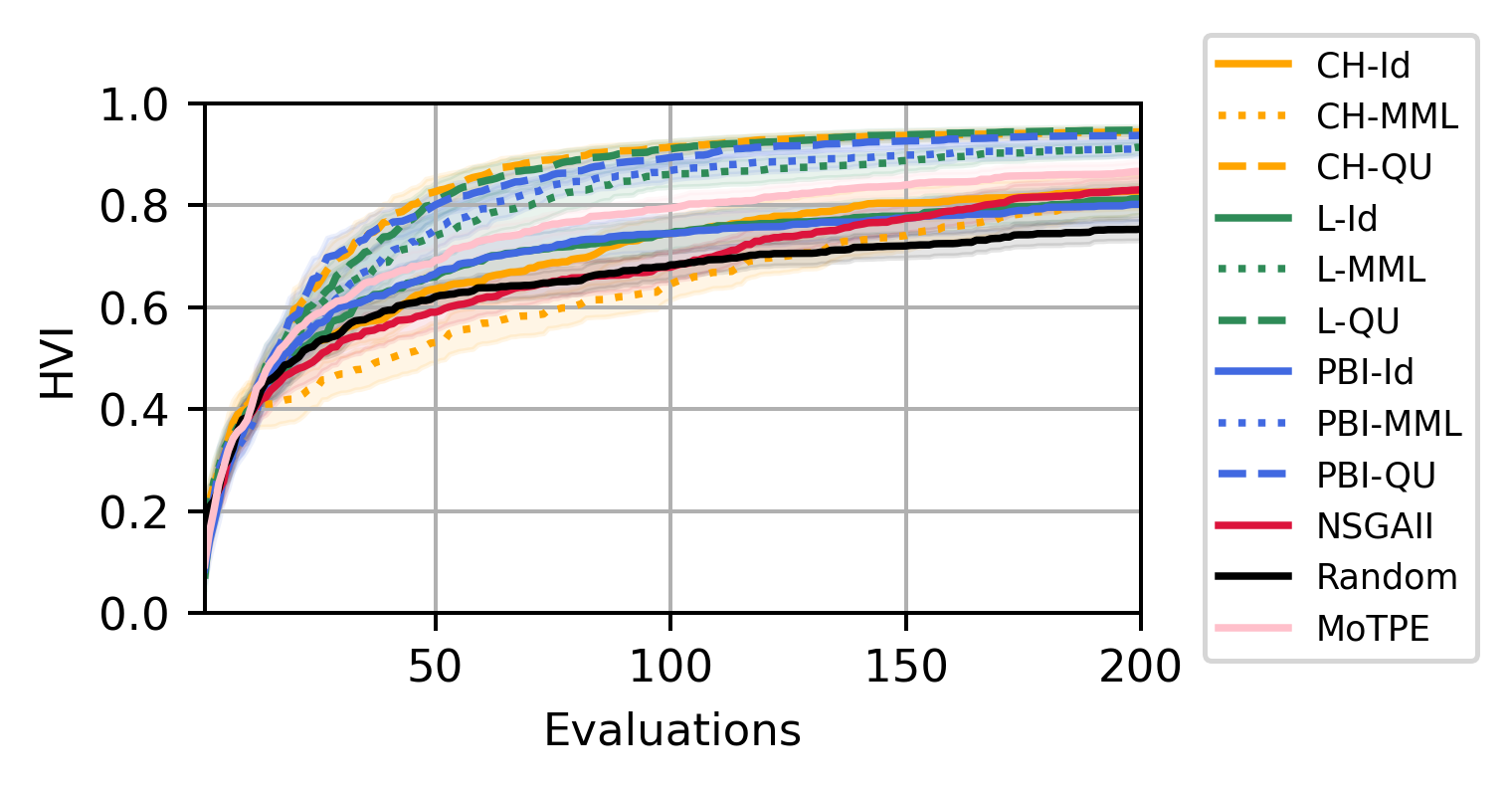}
    \caption{HVI curves of scalarization functions and objective normalization combinations on the HPOBench tabular tasks (i.e., 40 experiments per curve).}
    \label{fig:hpobench_average_hypervolume}
\end{figure}

\section{Additional Results on DTLZ Benchmark}
\label{appendix:dtlz}

To complement the HPO benchmark, we also evaluated our method on the well-known DTLZ benchmark suite, which consists of 7 continuous multi-objective optimization problems \cite{deb2005scalable}.
The problems can be configured to have any number of design variables ($N_i$) and objectives ($N_o$).
Note that each of the DTLZ problems is carefully constructed to be pathologically difficult in some way.
In particular, each of the problems exhibits one or more of the following properties
\begin{itemize}
    \item \textbf{P1}: local Pareto optimal solutions.
    \item \textbf{P2}: non-convexity of the PF, which causes many common scalarization functions such as linear (L)to converge to a small number of solutions (``corners'') leaving most of the PF unexplored.
    \item \textbf{P3}: degeneracy of the PF, making it less than $(N_o-1)$-dimensional.
    \item \textbf{P4}: non-uniform density of PF such as a larger concentration on the ``edges''.
    \item \textbf{P5}: a disjoint PF (i.e., formed by a union of non-intersecting PFs).
\end{itemize}
Table~\ref{tab:properties-dtlz} summarizes the properties of the DTLZ problems.

\begin{table}[]
    \centering
    \resizebox{\columnwidth}{!}{%
    \begin{tabular}{c|c|c|c|c|c|}
    \cline{2-6}
     &
      \begin{tabular}[c]{@{}c@{}}\textbf{P1}\\ Local\\ PFs\end{tabular} &
      \begin{tabular}[c]{@{}c@{}}\textbf{P2}\\ Non-Convex\\ PF\end{tabular} &
      \begin{tabular}[c]{@{}c@{}}\textbf{P3}\\ Degenerated\\ PF\end{tabular} &
      \begin{tabular}[c]{@{}c@{}}\textbf{P4}\\ Non-Uniform\\ Density of PF\end{tabular} &
      \begin{tabular}[c]{@{}c@{}}\textbf{P5}\\ Disjoint\\ PF\end{tabular} \\ \hline
    \multicolumn{1}{|c|}{\textbf{DTLZ 1}}                                                           & \checkmark &   &   &   &   \\ \hline
    \multicolumn{1}{|c|}{\textbf{DTLZ 2}}                                                           &   & \checkmark &   &   &   \\ \hline
    \multicolumn{1}{|c|}{\begin{tabular}[c]{@{}c@{}}\textbf{DTLZ 3}\\ (harder 2)\end{tabular}} & \checkmark & \checkmark &   &   &   \\ \hline
    \multicolumn{1}{|c|}{\textbf{DTLZ 4}}                                                           &   & \checkmark &   & \checkmark &   \\ \hline
    \multicolumn{1}{|c|}{\textbf{DTLZ 5}}                                                           &   &  & \checkmark &   &   \\ \hline
    \multicolumn{1}{|c|}{\begin{tabular}[c]{@{}c@{}}\textbf{DTLZ 6}\\ (harder 5)\end{tabular}} &  &  & \checkmark &   &   \\ \hline
    \multicolumn{1}{|c|}{\textbf{DTLZ 7}}                                                           &  & \checkmark  &   &   & \checkmark \\ \hline
    \end{tabular}%
    }
    \caption{Properties of DTLZ problems.}
    \label{tab:properties-dtlz}
\end{table}

In all cases, we define instances of the problems with $N_i=8$ variables and $N_o=3$ objectives. Then, we perform 10 repetitions for all combinations of the scalarizations from above (i.e., Linear (L), Chebyshev (CH), and PBI) with objective normalizations (i.e., Identity (Id), MinMax-Log (MML), Quantile-Uniform (QU)). We also compare against the NSGAII implementation from Optuna (and also pymoo~\cite{blank2020pymoo} but as the results are similar we only present Optuna which is asynchronous parallel) and a two-phase solver that uses 2000 initial evaluations of the global design of experiments followed by a local trust-region solve off the best points, built with ParMOO by following the high-dimensional MOO tutorial in the docs \cite{chang2023parmoo}. MoTPE was not used because it is excessively slow in this setup. We evaluate our results by measuring the HVI (as a diversity metric) and $\GD$ (as a convergence metric). 
To summarize the results across all experiments we provide the average ranking of evaluated optimizers in Figure~\ref{fig:dtlz}. From this average ranking we removed DTLZ 1 and 3 which appear to not be resolved by any of the considered optimizers as shown in Figures~\ref{fig:dtlz1-hv} and \ref{fig:dtlz3-hv} (both with local Pareto-optimal solutions). With 10 repetitions and 5 problems, the average ranking is based on 50 experiments. Individual HVI and $\GD$ metrics with respect to the number of evaluated black-box functions are provided in Figures~\ref{fig:dtlz-plots-part-1} and \ref{fig:dtlz-plots-part-2}.

Although the problems exhibit certain pathological properties which can happen in MOHPO, none of them exhibit differences in objective scales or large objective ranges therefore limiting the need for objective normalization. In fact, it can be observed in the average ranking (Figure~\ref{fig:dtlz}) that for D-MoBO the best performing combination is based on identity (Id) normalization (i.e., no normalization is performed). Also, looking at the Figures~|\ref{fig:dtlz-plots-part-1} and \ref{fig:dtlz-plots-part-2} of the independent problems it is clear that there is a joint effect between scalarization functions and objectives normalization which makes it impossible to fix one objectives normalization for all scalarization functions.

\begin{figure*} 
    \centering
    \begin{subfigure}{\columnwidth}
        \centering
        \includegraphics[width=\textwidth]{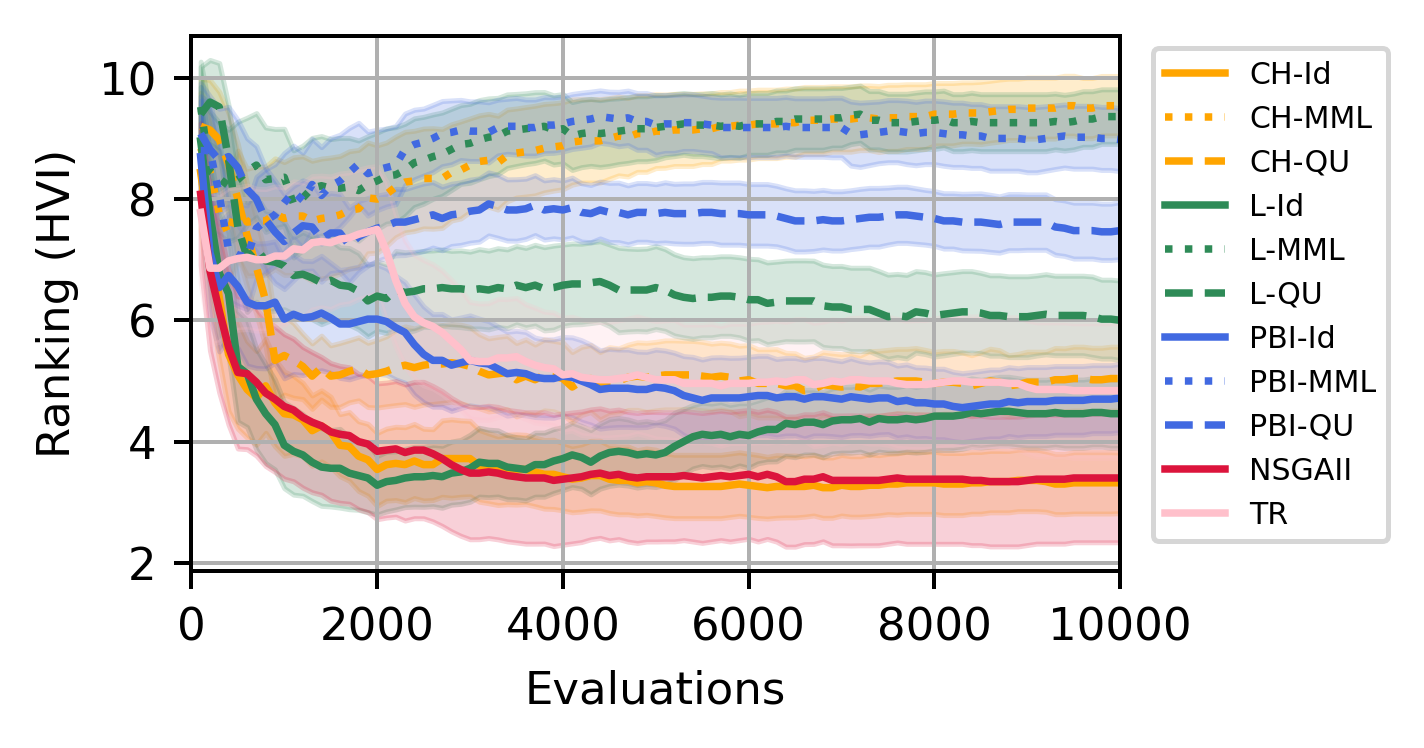}
        \caption{Ranking from HVI}
        \label{fig:dtlz_rank_from_hv}
    \end{subfigure}
    \begin{subfigure}{\columnwidth}
        \centering
        \includegraphics[width=\textwidth]{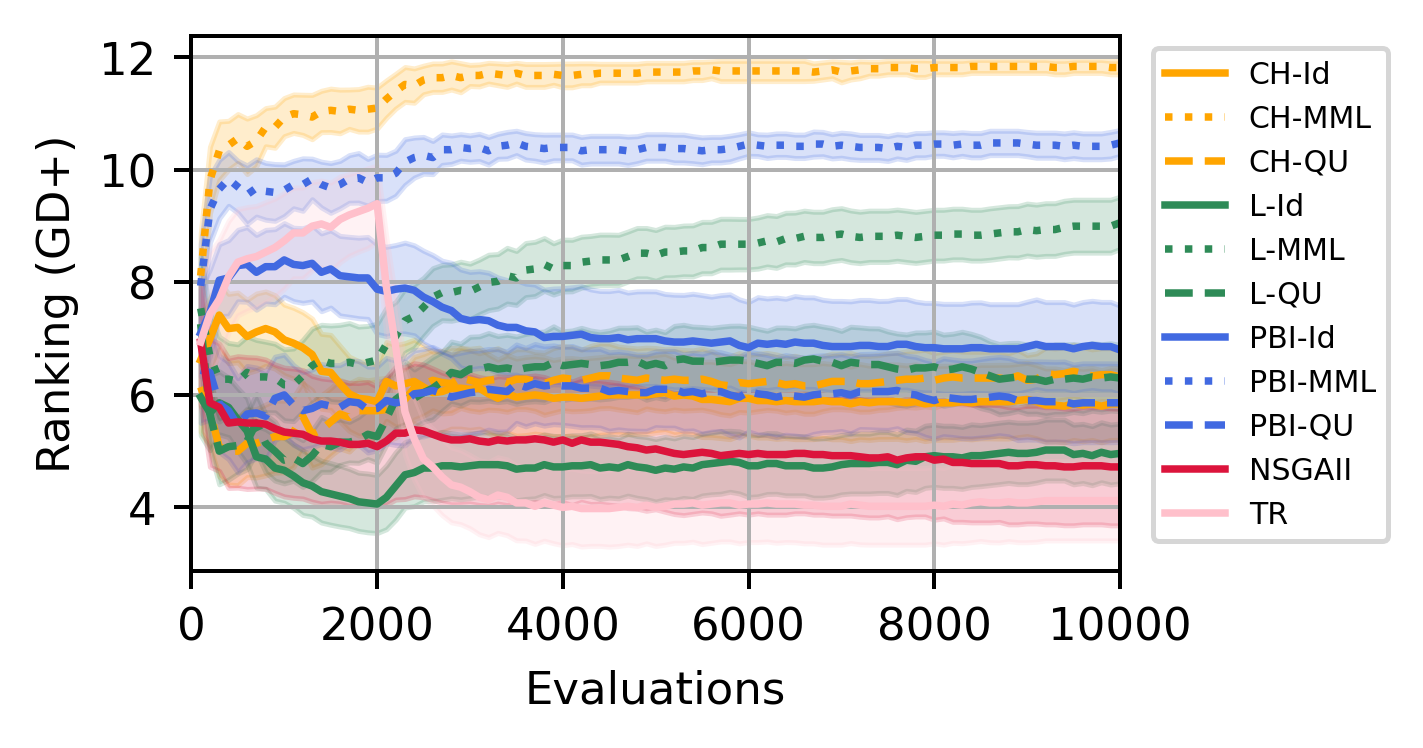}
        \caption{Ranking from $\GD$}
        \label{fig:dtlz_rank_from_gd}
    \end{subfigure}
    \caption{Average ranking (± 1.96 standard error) with respect to the number of black-box function evaluations of D-MoBO (with different scalarization functions and objectives normalization) and other MOO optimizers on the continuous DTLZ benchmark (2,4,5,6 and 7).}
    \label{fig:dtlz}
\end{figure*}

\begin{figure*} 
    \centering

    \begin{subfigure}{\columnwidth}
        \centering
        \includegraphics[width=\textwidth]{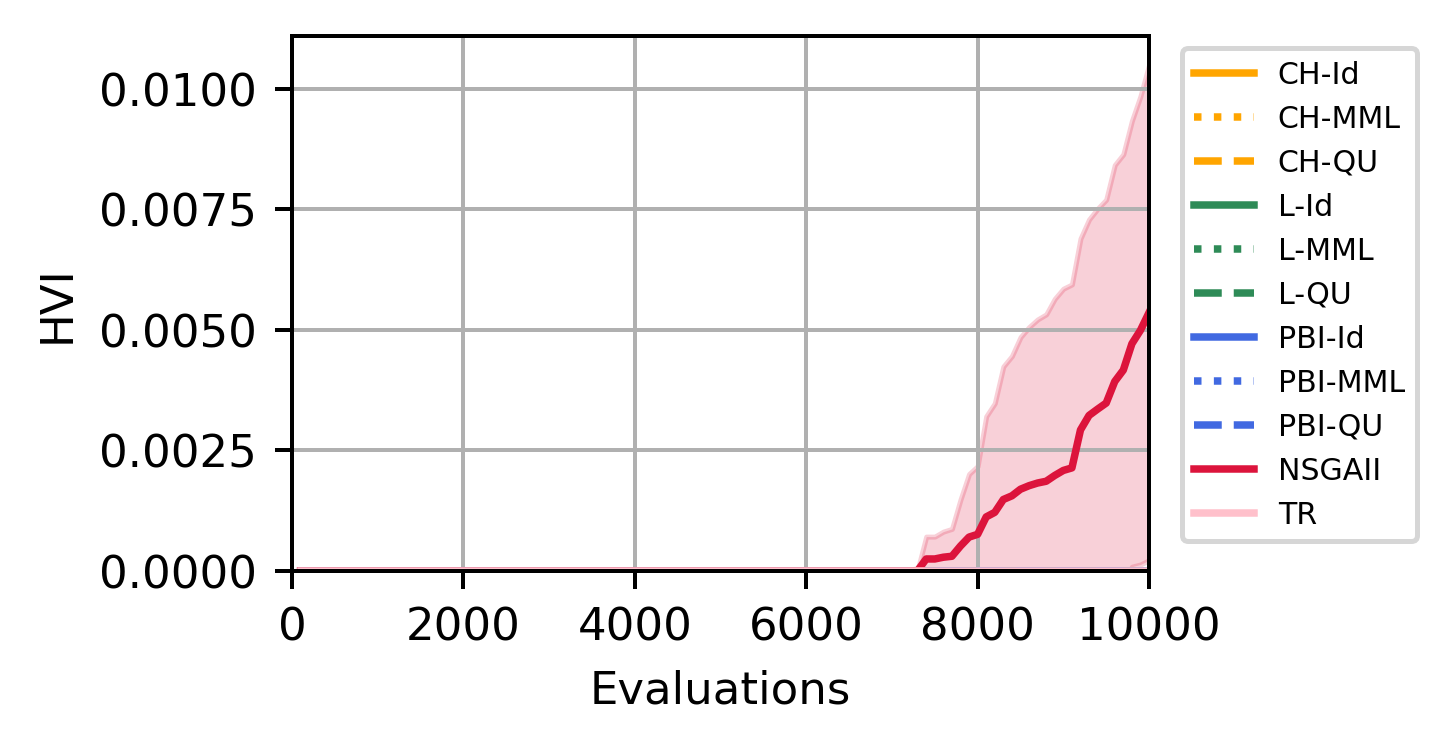}
        \caption{DTLZ 1 (HVI) --- \emph{unresolved}}
        \label{fig:dtlz1-hv}
    \end{subfigure}
    \begin{subfigure}{\columnwidth}
        \centering
        \includegraphics[width=\textwidth]{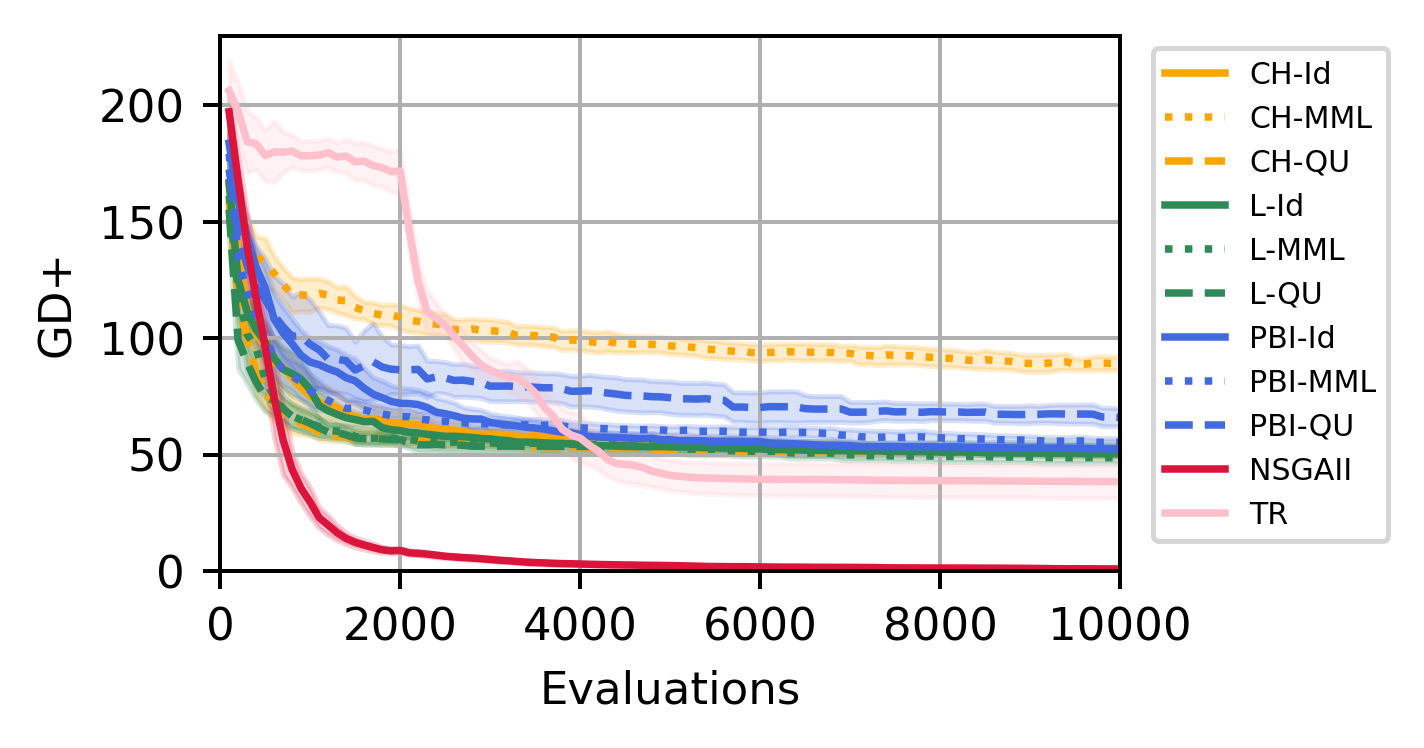}
        \caption{DTLZ 1 ($\GD$) --- \emph{unresolved}}
        \label{fig:dtlz1-gd}
    \end{subfigure}

    \begin{subfigure}{\columnwidth}
        \centering
        \includegraphics[width=\textwidth]{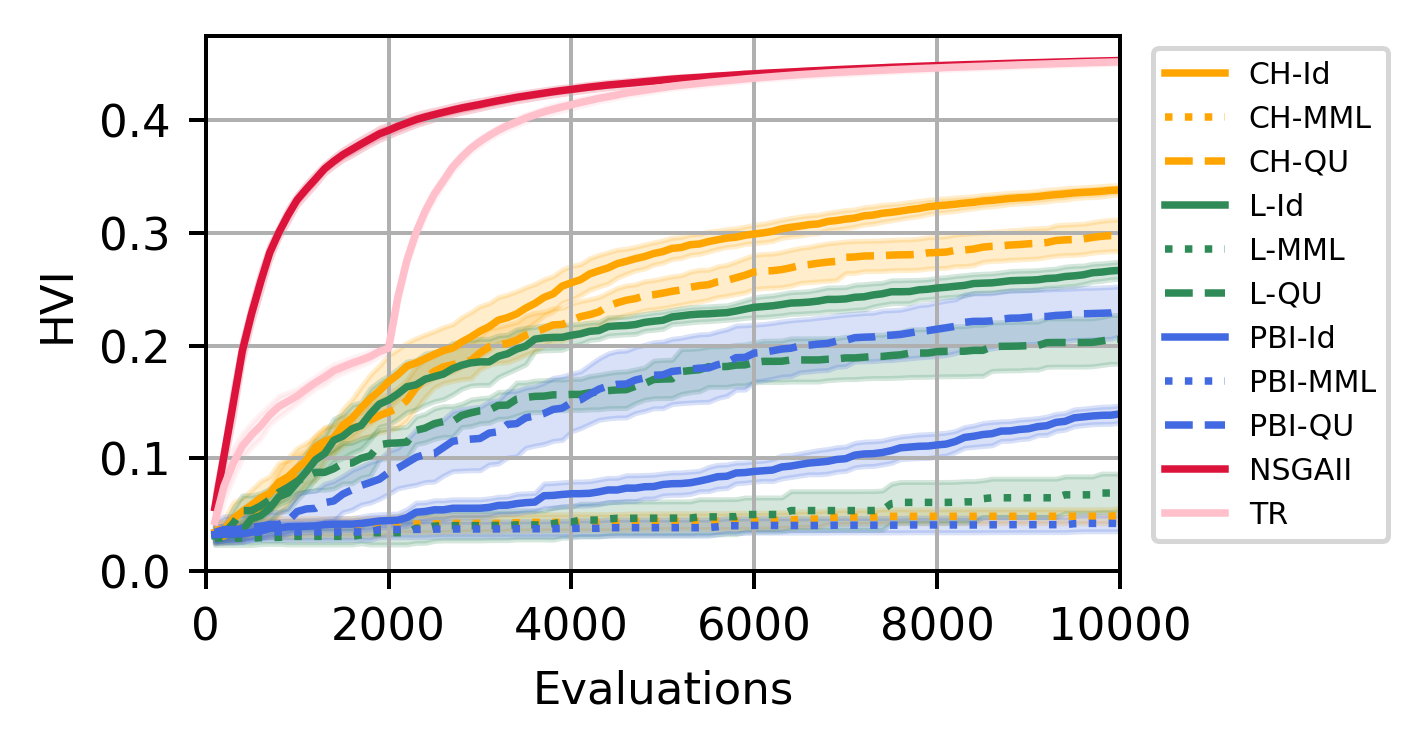}
        \caption{DTLZ 2 (HVI)}
        \label{fig:dtlz2-hv}
    \end{subfigure}
    \begin{subfigure}{\columnwidth}
        \centering
        \includegraphics[width=\textwidth]{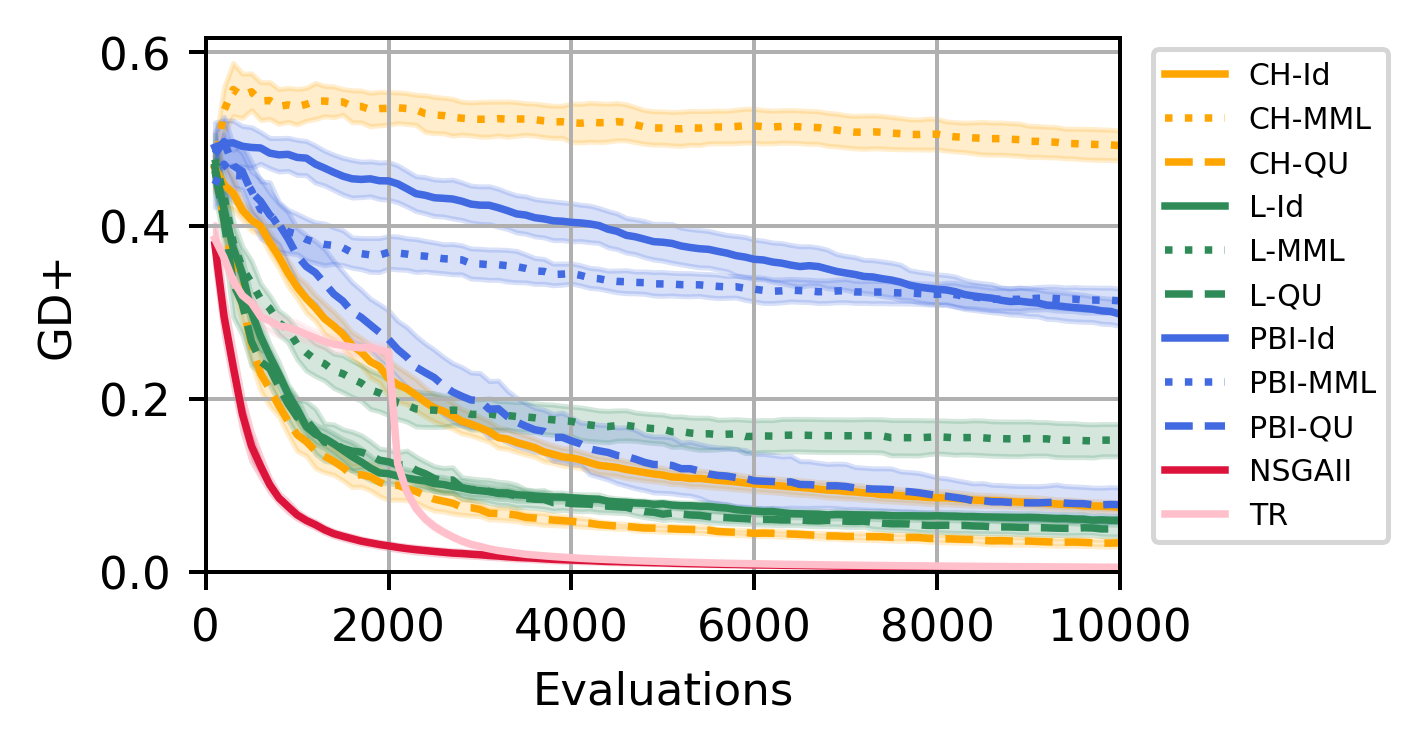}
        \caption{DTLZ 2 ($\GD$)}
        \label{fig:dtlz2-gd}
    \end{subfigure}

    \begin{subfigure}{\columnwidth}
        \centering
        \includegraphics[width=\textwidth]{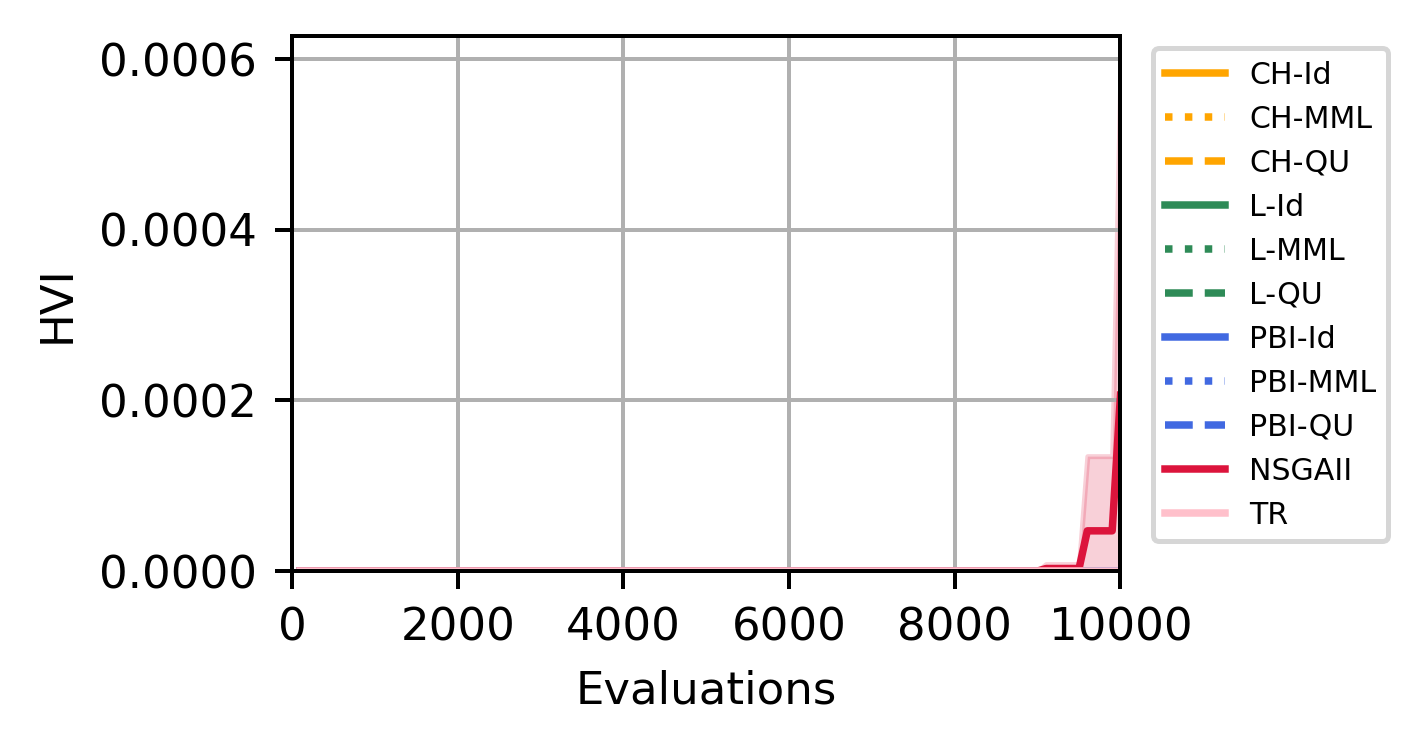}
        \caption{DTLZ 3 (HVI) --- \emph{unresolved}}
        \label{fig:dtlz3-hv}
    \end{subfigure}
    \begin{subfigure}{\columnwidth}
        \centering
        \includegraphics[width=\textwidth]{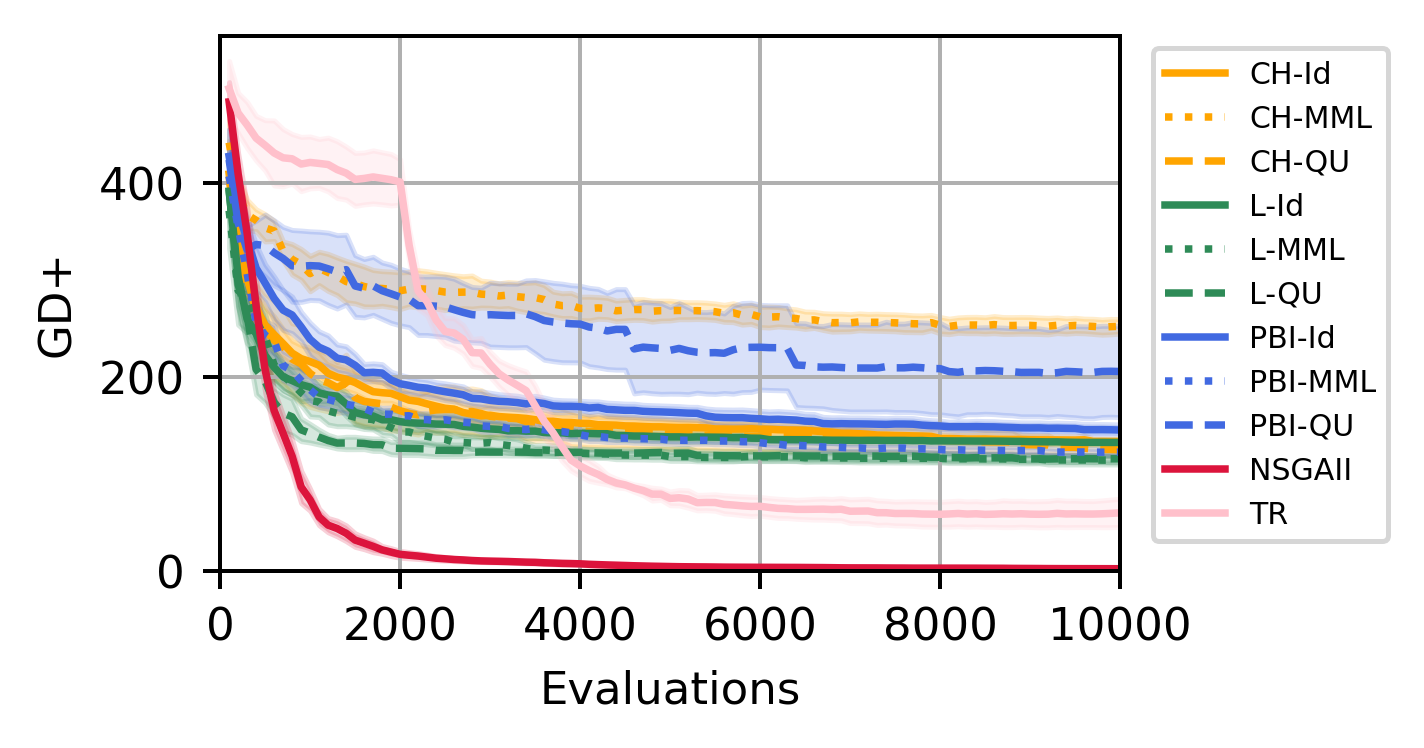}
        \caption{DTLZ 3 ($\GD$) --- \emph{unresolved}}
        \label{fig:dtlz3-gd}
    \end{subfigure}

    \begin{subfigure}{\columnwidth}
        \centering
        \includegraphics[width=\textwidth]{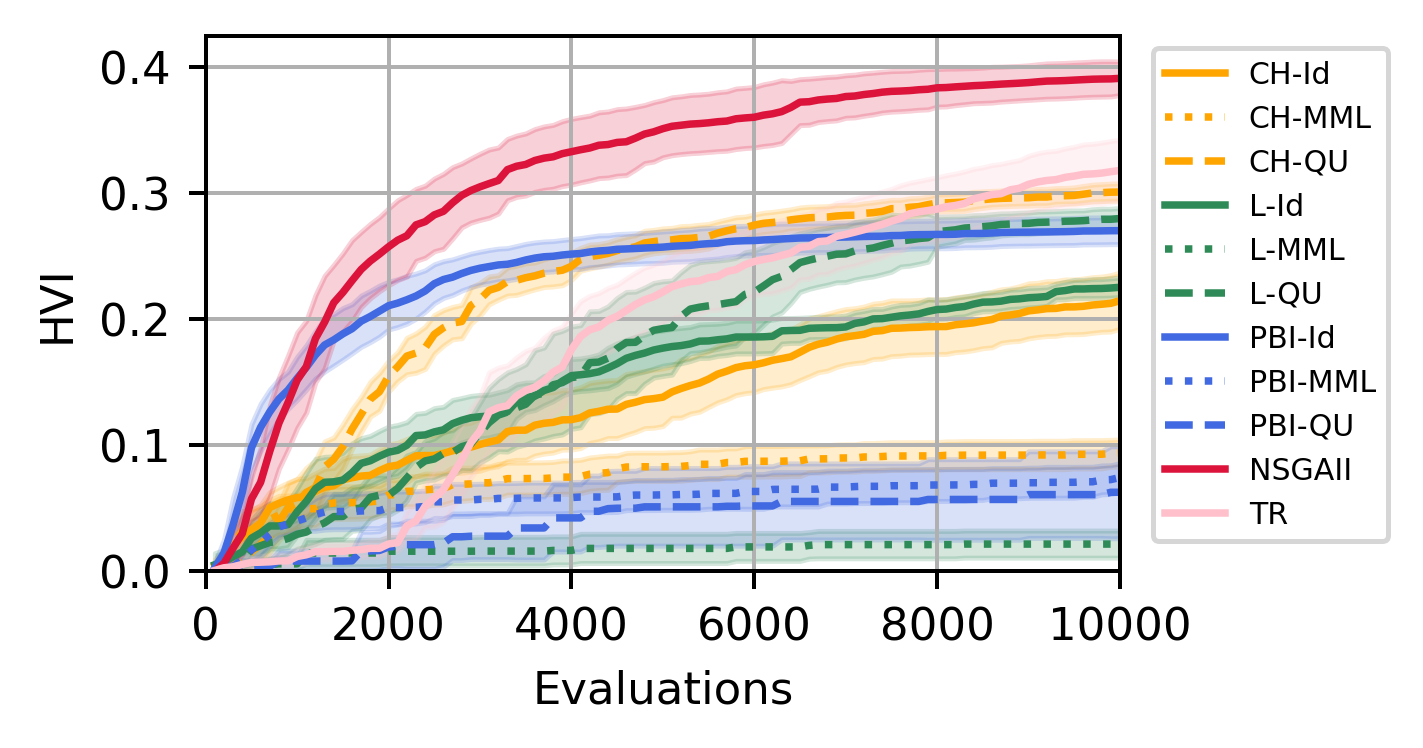}
        \caption{DTLZ 4 (HVI)}
        \label{fig:dtlz4-hv}
    \end{subfigure}
    \begin{subfigure}{\columnwidth}
        \centering
        \includegraphics[width=\textwidth]{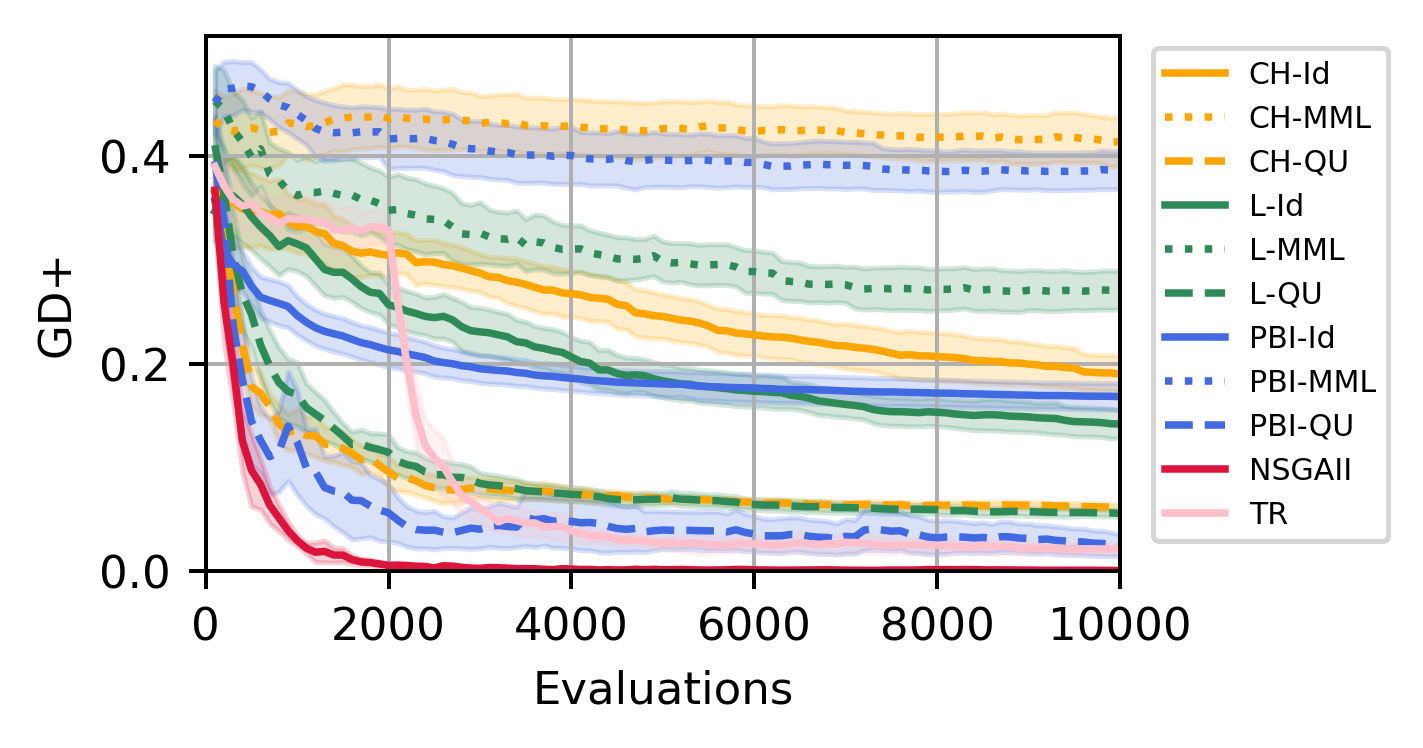}
        \caption{DTLZ 4 ($\GD$)}
        \label{fig:dtlz4-gd}
    \end{subfigure}

    \caption{Comparing HVI and $\GD$ for different optimizers on DTLZ 1-4. As it can be seen in Figure~\ref{fig:dtlz1-hv} and \ref{fig:dtlz3-hv}, Problems 1 and 3 are \emph{unresolved} (i.e., non of the tested optimizers converge to the Pareto-Front).}
    \label{fig:dtlz-plots-part-1}
\end{figure*}

\begin{figure*} 
    \centering

    \begin{subfigure}{\columnwidth}
        \centering
        \includegraphics[width=\textwidth]{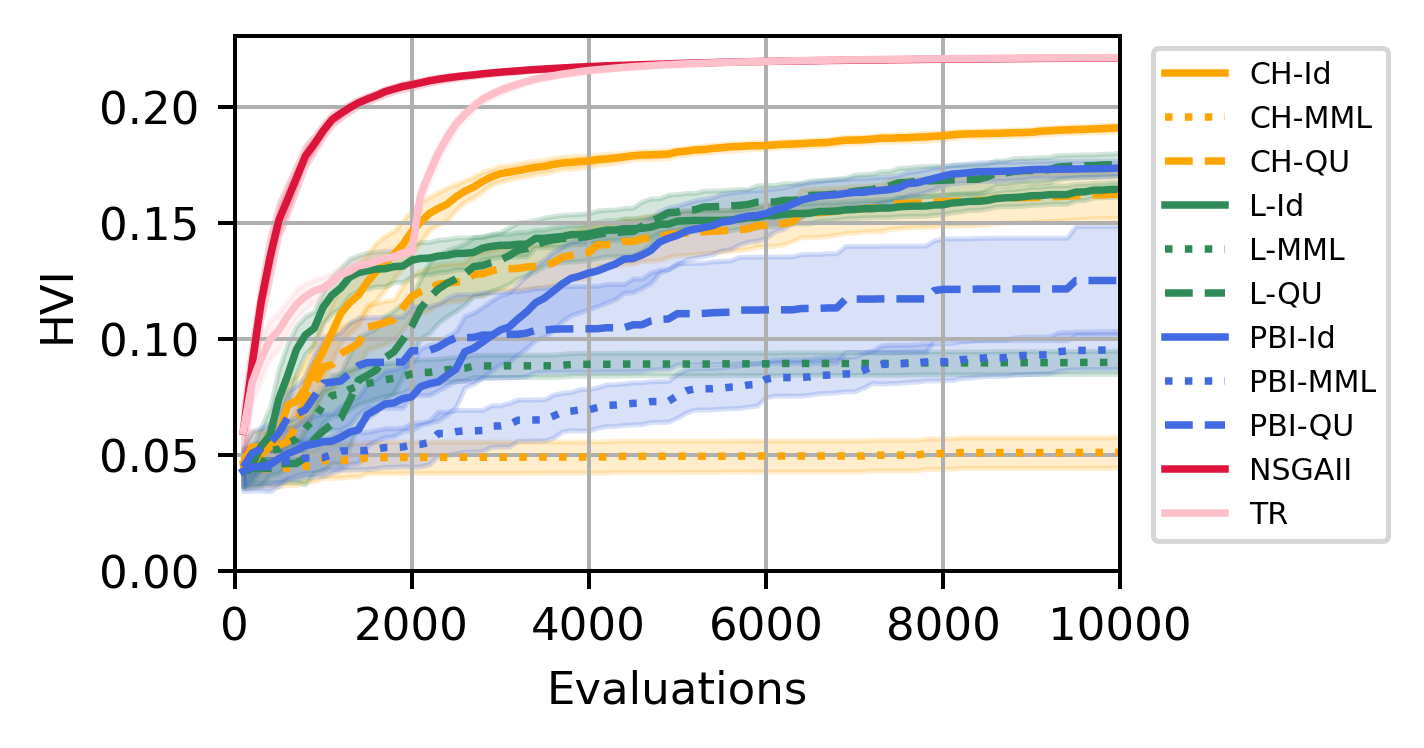}
        \caption{DTLZ 5 (HVI)}
        \label{fig:dtlz5-hv}
    \end{subfigure}
    \begin{subfigure}{\columnwidth}
        \centering
        \includegraphics[width=\textwidth]{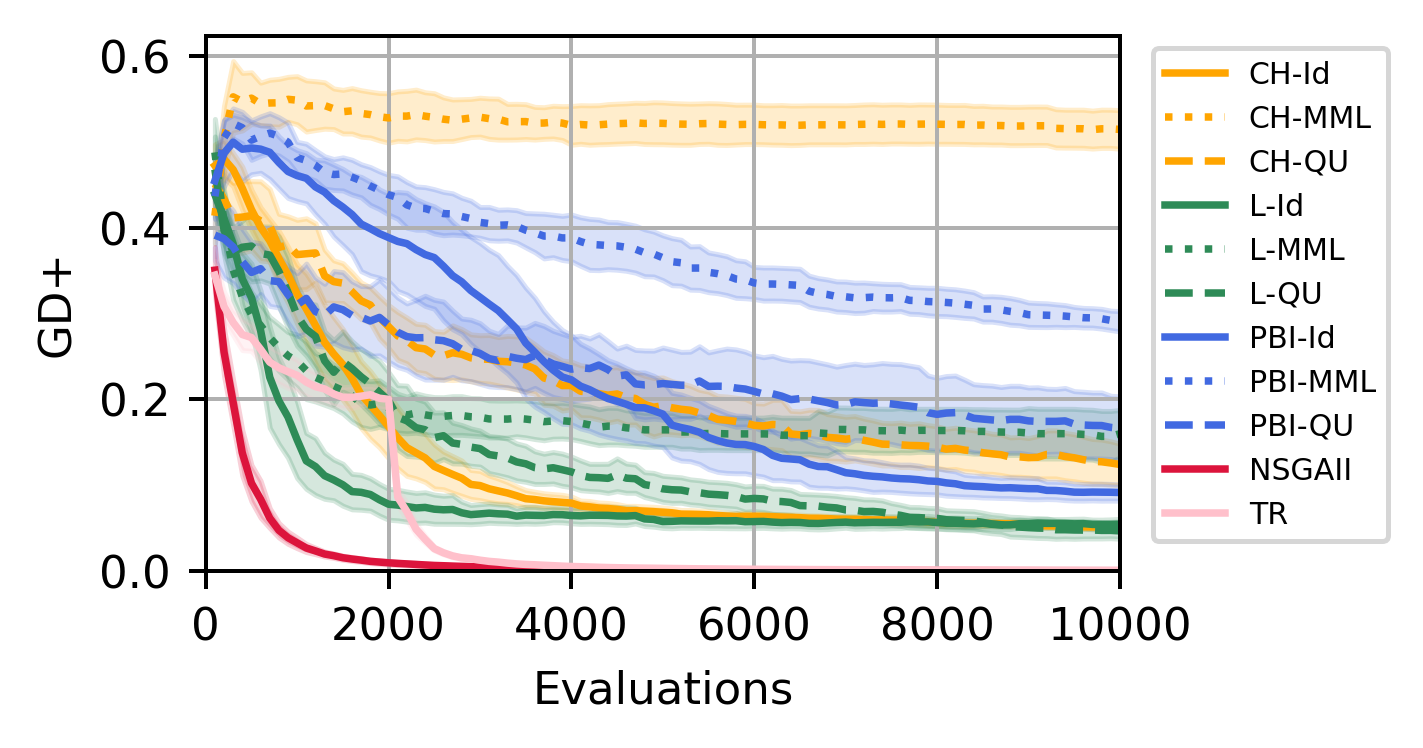}
        \caption{DTLZ 5 ($\GD$)}
        \label{fig:dtlz5-gd}
    \end{subfigure}

    \begin{subfigure}{\columnwidth}
        \centering
        \includegraphics[width=\textwidth]{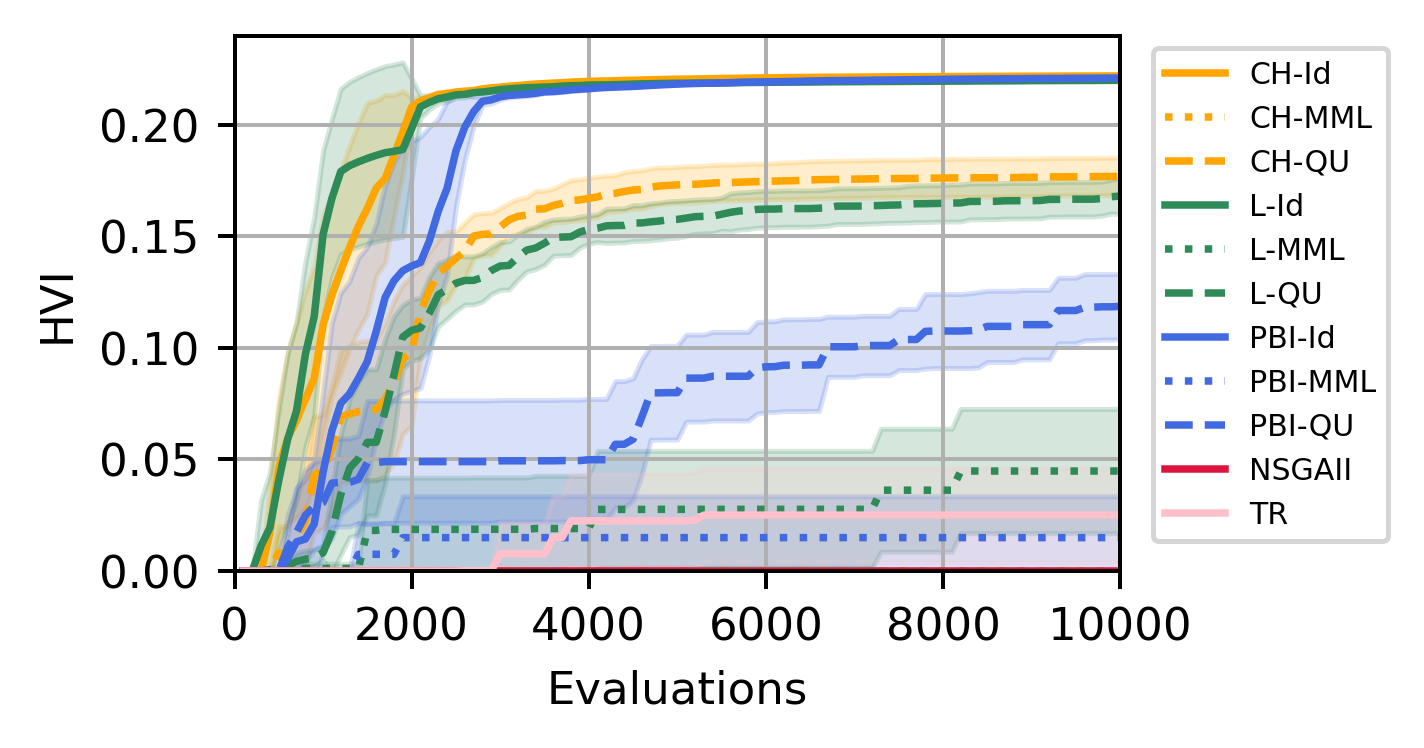}
        \caption{DTLZ 6 (HVI)}
        \label{fig:dtlz6-hv}
    \end{subfigure}
    \begin{subfigure}{\columnwidth}
        \centering
        \includegraphics[width=\textwidth]{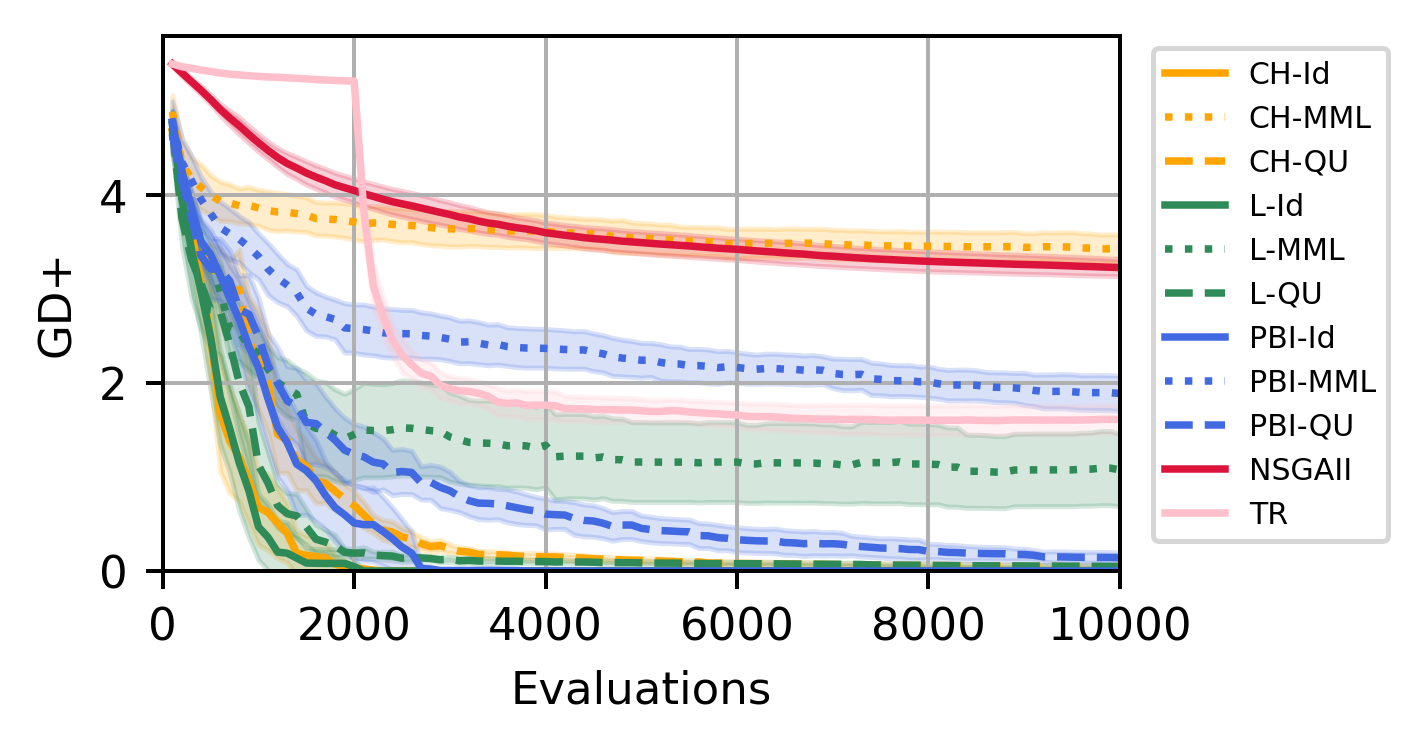}
        \caption{DTLZ 6 ($\GD$)}
        \label{fig:dtlz6-gd}
    \end{subfigure}

    \begin{subfigure}{\columnwidth}
        \centering
        \includegraphics[width=\textwidth]{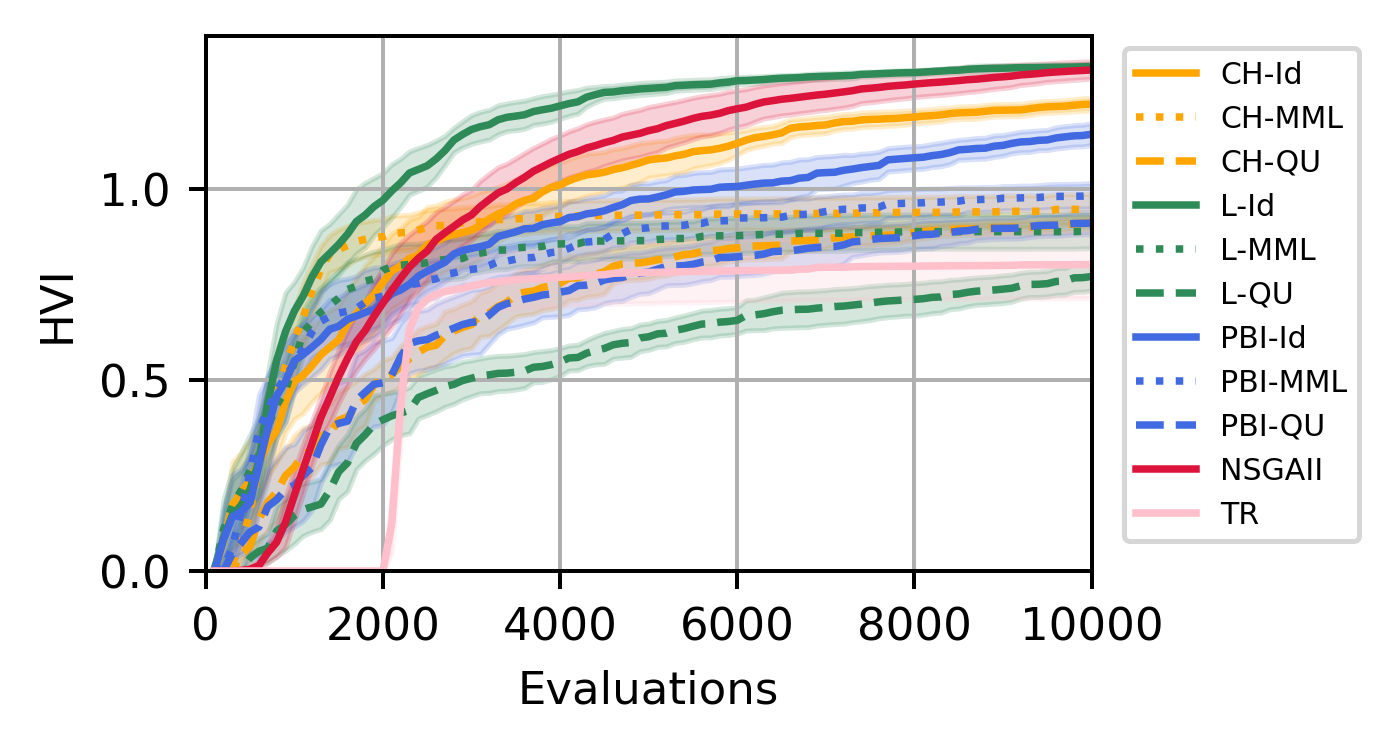}
        \caption{DTLZ 7 (HVI)}
        \label{fig:dtlz7-hv}
    \end{subfigure}
    \begin{subfigure}{\columnwidth}
        \centering
        \includegraphics[width=\textwidth]{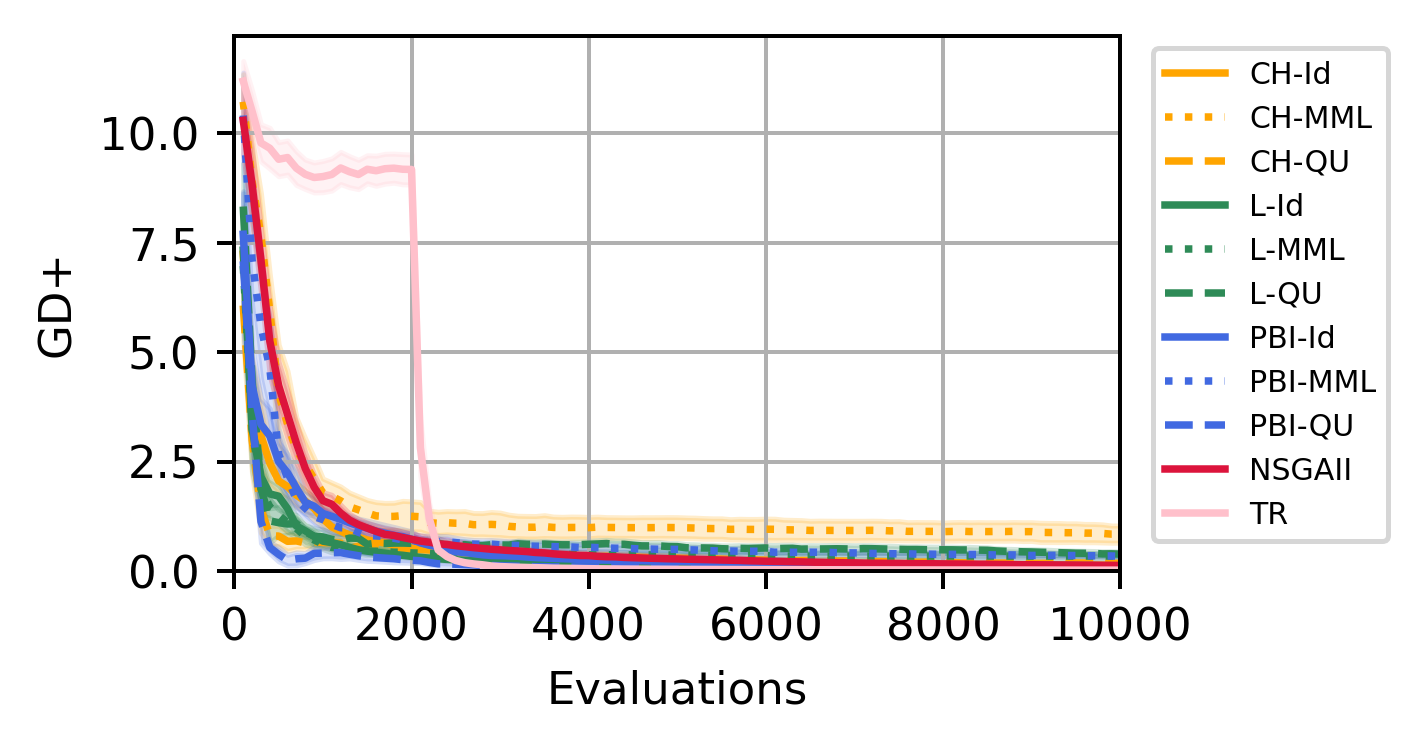}
        \caption{DTLZ 7 ($\GD$)}
        \label{fig:dtlz7-gd}
    \end{subfigure}

    \caption{Comparing HVI and $\GD$ for different optimizers on DTLZ 5-7.}
    \label{fig:dtlz-plots-part-2}
\end{figure*}

\section{Details on the Combo Benchmark}
\label{appendix:combo}

\subsection{Description}

The Combo benchmark dataset~\cite{xia_predicting_2018} is composed of 165668 training data points (60\%), 55,222 (20\%) validation data points, and 55,222 (20\%) test data points respectively. Each data point has three types of input features: 942 (RNA-Sequence), 3,839 (drug-1 descriptors), and 3,893 (drug-2 descriptors) respectively. The data set size is about 4.2 GB. It is a regression problem with the goal of predicting the growth percentage of cancer cells given cell line molecular features and the descriptors of two drugs. The networks are trained for a maximum budget of 50 epochs and 30 minutes. The validation $R^2$ coefficient at the last trained epoch is used as the objective for hyperparameter search.

The baseline model is composed of 3 inputs each processed by a sub-network of three fully connected layers. Then, the outputs of these sub-models are concatenated and input in another sub-network of 3 layers before the final output. All the fully connected layers have 1000 neurons and ReLU activation. It reaches a validation and test $R^2$ of 0.87 after 100 epochs.
The number of neurons and the activation function of each layer are exposed for the hyperparameter search. The search space is defined as follows: the number of neurons in $[10, 1024]$ with a log-uniform prior; activation function in [elu, gelu, hard sigmoid, linear, relu, selu, sigmoid, softplus, softsign, swish, tanh]; optimizer in [sgd, rmsprop, adagrad, adadelta, adam]; global dropout-rate in $[0, 0.5]$; batch size in $[8, 512]$ with a log-uniform prior; and learning rate in $[10^{-5}, 10^{-2}]$ with a log-uniform prior. A learning rate warmup strategy is activated based on a boolean variable. Accordingly, the base learning rate of this warmup strategy is searched in $[10^{-5}, 10^{-2}]$ with a log-uniform prior. Residual connections are created based on a boolean variable. A learning rate scheduler is activated based on a Boolean variable. The reduction factor of this scheduler is searched in $[0.1, 1.0]$, and its patience in $[5, 20]$. An early-stopping strategy is activated based on a Boolean variable. The patience of this strategy is searched in $[5, 20]$. Then, batch normalization is also activated based on the Boolean variable. The loss is searched among [mse, mae, logcosh, mape, msle, huber]. The data preprocessing is searched among [std, minmax, maxabs]. This corresponds to 22 hyperparameters. All experiments are performed with the same initial random state 42.

\subsection{Other Results}

In this section, we provides additional results from our experiments on the Combo benchmark which strengthen the claims of the study.

First, we start by looking at the parallel performance of the different algorithms. In Table~\ref{tab:table2} we collect the number of evaluations completed successfully (\#D) and the number of evaluations that failed (\#F). It can be observed that Random search samples have a lot more failures than other optimizers. Also, NSGAII and Random search produce more evaluations in total. The differences in the number of completed evaluations (\#D) are purely due to the difference in strategy between the different optimizers. In fact, we computed the \emph{utilization} versus time in Figure~\ref{fig:combo-utilization}. 
The utilization at a time $t$ corresponds to the ratio $U(t)= W_a(t) / W$ of active worker $W_a$ out of all available workers $W$. Therefore, we can see in Figure~\ref{fig:combo-utilization} that the workers remain busy more than 95\% of the time which means that fewer evaluations do not come from a lack of efficiency of the optimizer to cope with the demand. In addition, having all optimizers with good utilization also means that the difference in observed HVI is not a by-product of parallel optimization such as in~\cite{egele2022asynchronous} (which improve utilization) but a result of differences in multi-objective optimization strategies. For NSGAII which performs more evaluation than D-MoBO, this can be justified by a policy with stronger exploitation for NSGAII. As one of the objectives ($y_2$) is latency and the other is the number of parameters $y_3$, both are highly correlated to training speed, NSGAII will favor configurations training faster (because it ranks candidates by successive non-dominance). On the other side, D-MoBO which is based on BO has more exploration (led by its uncertainty) even if Algorithm~\ref{alg:dbo-algorithm} reduced it through the periodic decay (line 9).

\begin{table} 
    \centering
    \resizebox{\columnwidth}{!}{%
    \begin{tabular}{l|cc|cc|cc}
        & \multicolumn{2}{c|}{\textbf{40 Workers}} & \multicolumn{2}{c|}{\textbf{160 Workers}} & \multicolumn{2}{c}{\textbf{640 Workers}} \\
        & \#D   & \#F    & \#D     & \#F      & \#D     & \#F     \\ \hline
    \textbf{Random}    & 570   & 8     & 2,462   & 51     & 10,512  & 227   \\
    \textbf{MoTPE}     & 253   & 12    & 1,328   & 2      & 6,912   & 13    \\
    \textbf{NSGAII}    & 543   & 4     & 2,396   & 8      & 11,957  & 15    \\
    \textbf{D-MoBO} &
    496 &
    0 &
    1,835 &
    5 &
    7,842 &
    14  \\
    \textbf{D-MoBO NP} & 307   & 2      & 1353    & 9       & 4,947   & 24   
    \end{tabular}%
    }
    \caption{Comparing optimizers different numbers of workers. \#D: the number of completed evaluations and \#F: the number of failed evaluations.}
    \label{tab:table2}
\end{table}

\begin{figure*}[!t]
    \centering
    \begin{subfigure}{\columnwidth}
        \centering
        \includegraphics[width=0.9\textwidth]{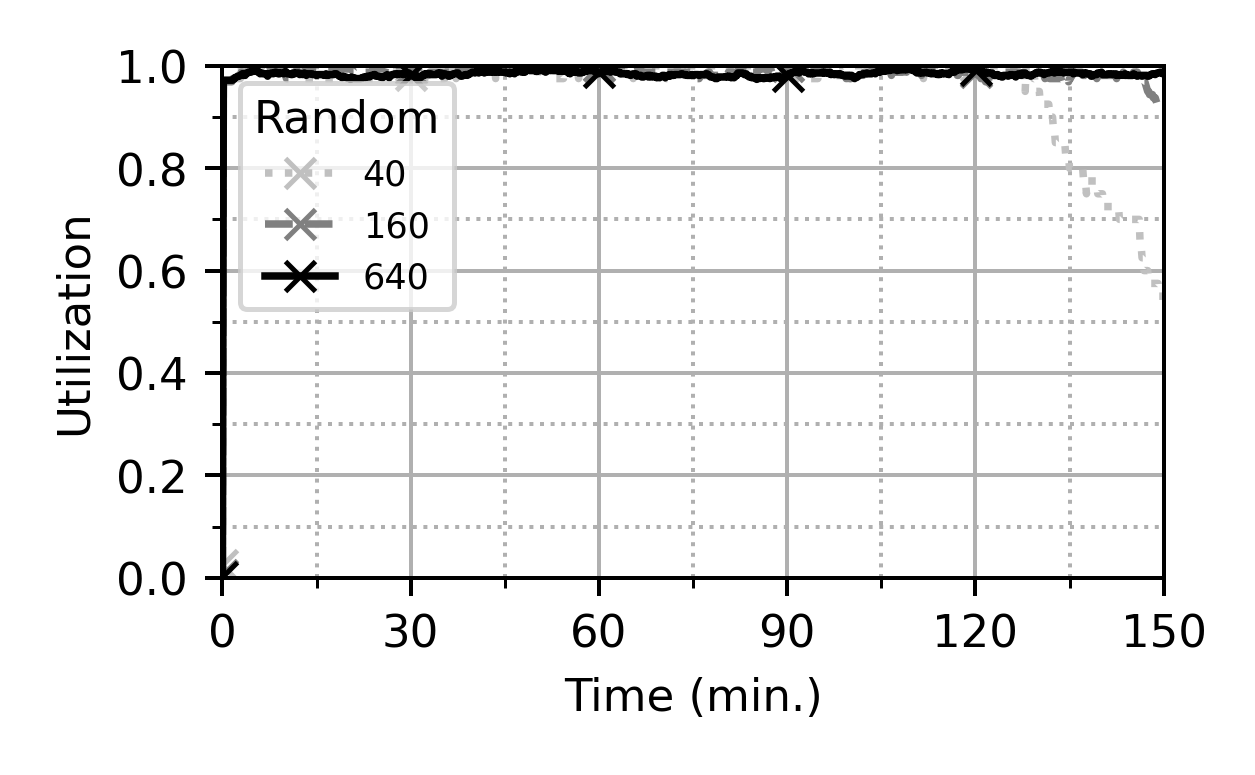}
        \caption{Random}
        \label{fig:utilization-vs-time-polaris-combo-random-scaling.png}
    \end{subfigure}
    \begin{subfigure}{\columnwidth}
        \centering
        \includegraphics[width=0.9\textwidth]{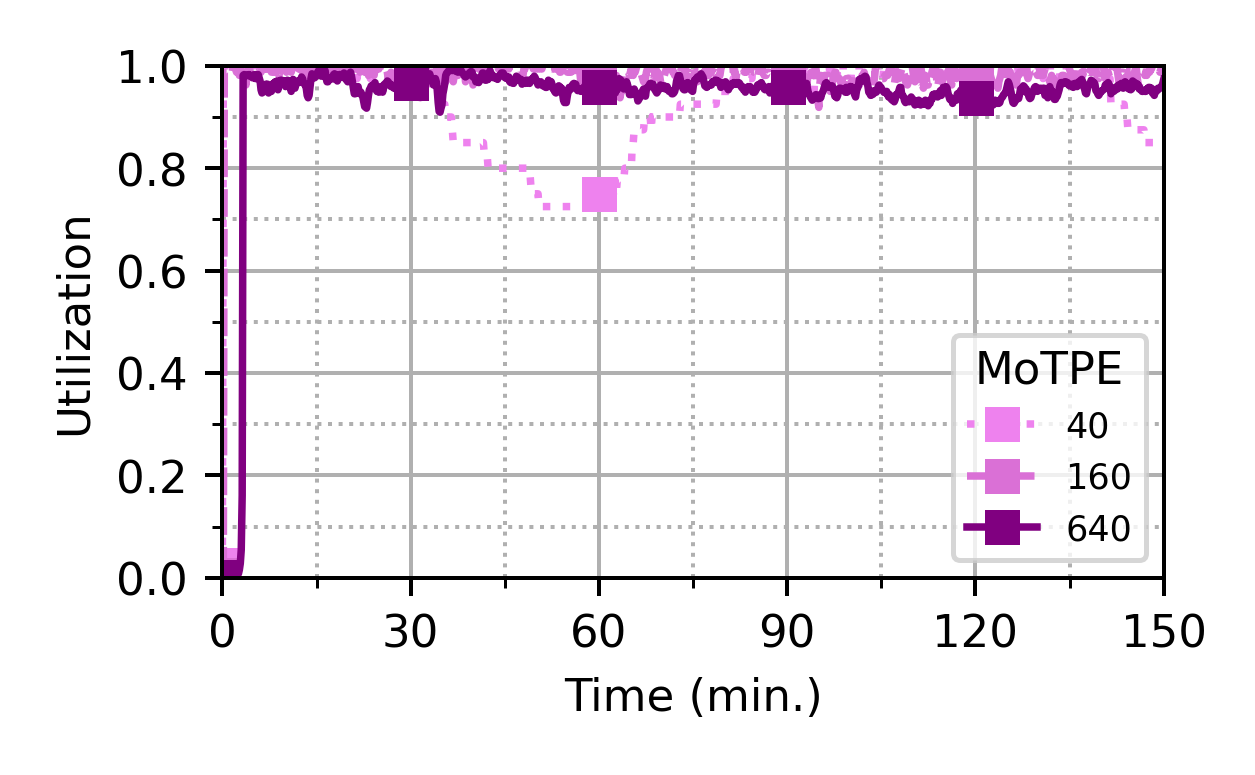}
        \caption{MoTPE}
        \label{fig:utilization-vs-time-polaris-combo-motpe-scaling.png}
    \end{subfigure}
    \begin{subfigure}{\columnwidth}
        \centering
        \includegraphics[width=0.9\textwidth]{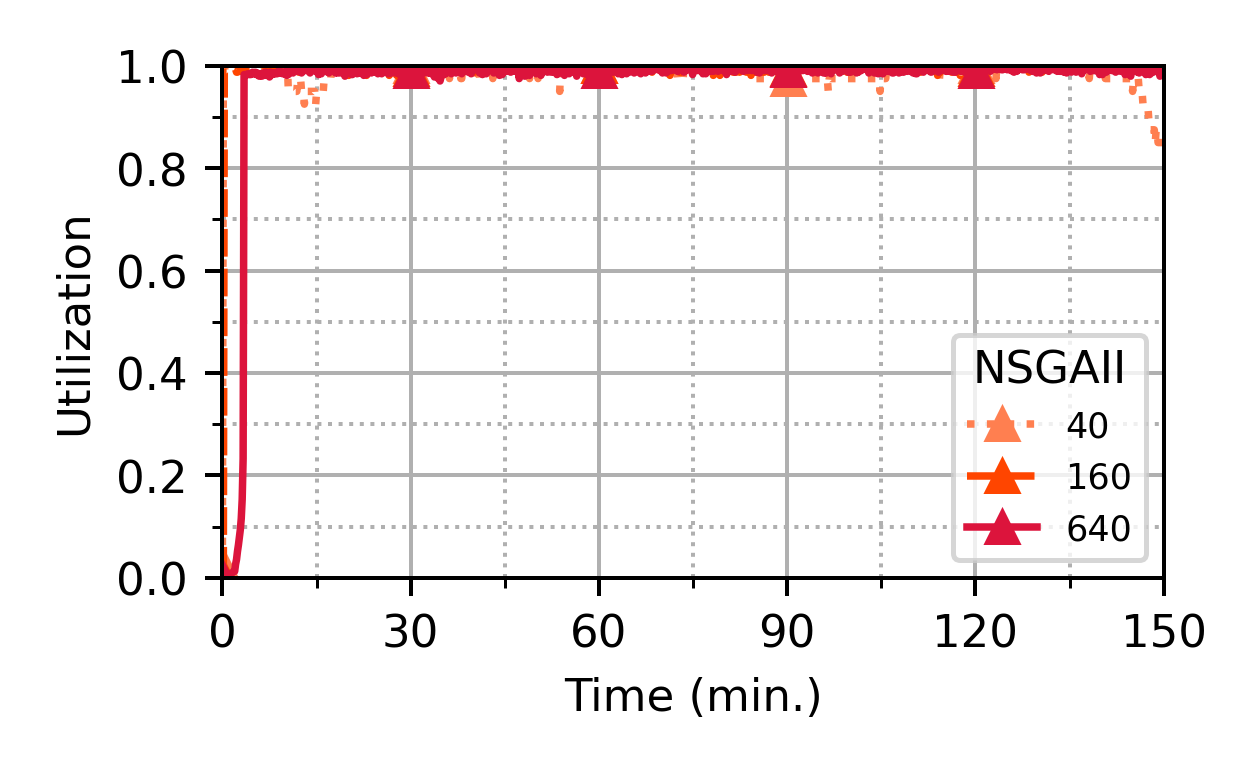}
        \caption{NSGAII}
        \label{fig:utilization-vs-time-polaris-combo-nsgaii-scaling.png}
    \end{subfigure}
    \begin{subfigure}{\columnwidth}
        \centering
        \includegraphics[width=0.9\textwidth]{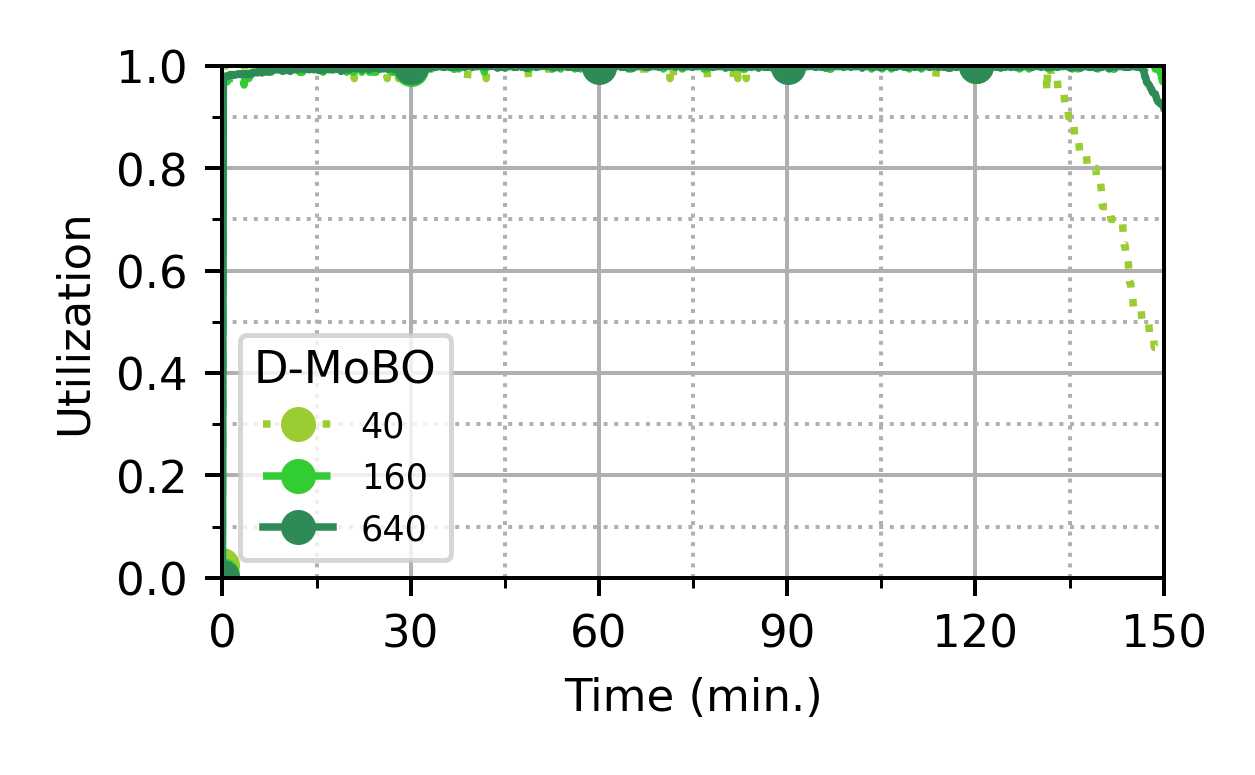}
        \caption{D-MoBO}
        \label{fig:utilization-vs-time-polaris-combo-d-mobo-scaling.png}
    \end{subfigure}
    \caption{Comparing worker utilization of MOO algorithms w.r.t. time in minutes for different number of parallel workers.}
    \label{fig:combo-utilization}
\end{figure*}

Second, we look at the effect of the penalty for different numbers of workers in order to evaluate if its effect is more or less ``independent'' from the scale. As it can be observed in Figure~\ref{fig:combo/hypervolume-vs-time-penalty} for 40, 160, and 640 workers the HVI curve of D-MoBO (implementing the penalty) is always dominating the HVI curve of D-MoBO (NP) (without penalty). This confirms that the penalty is efficient to give more weight to trade-offs of interest during the optimization.

\begin{figure}[!t]
    \centering
    \begin{subfigure}{\columnwidth}
        \centering
        \includegraphics[width=0.9\textwidth]{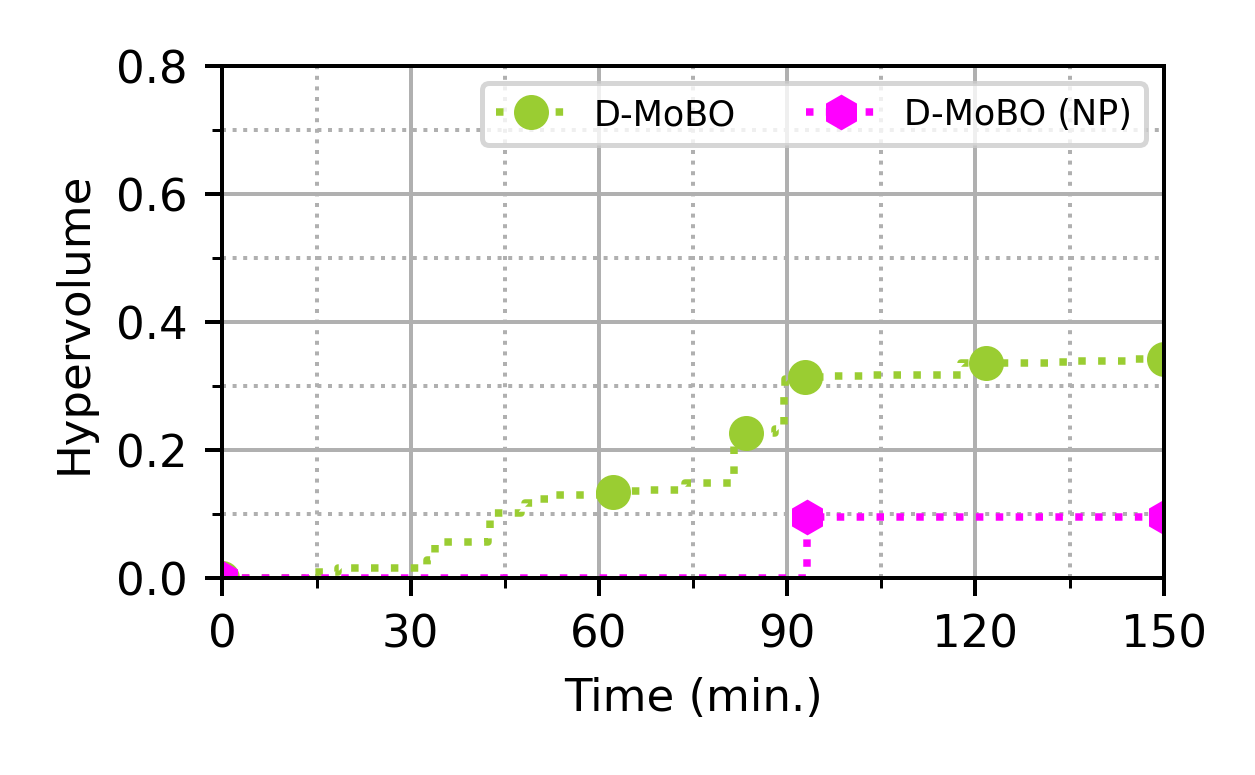}
        \caption{40 Workers}
        \label{fig:hypervolume-vs-time-polaris-combo-d-mobo-constraint-10}
    \end{subfigure}
    \begin{subfigure}{\columnwidth}
        \centering
        \includegraphics[width=0.9\textwidth]{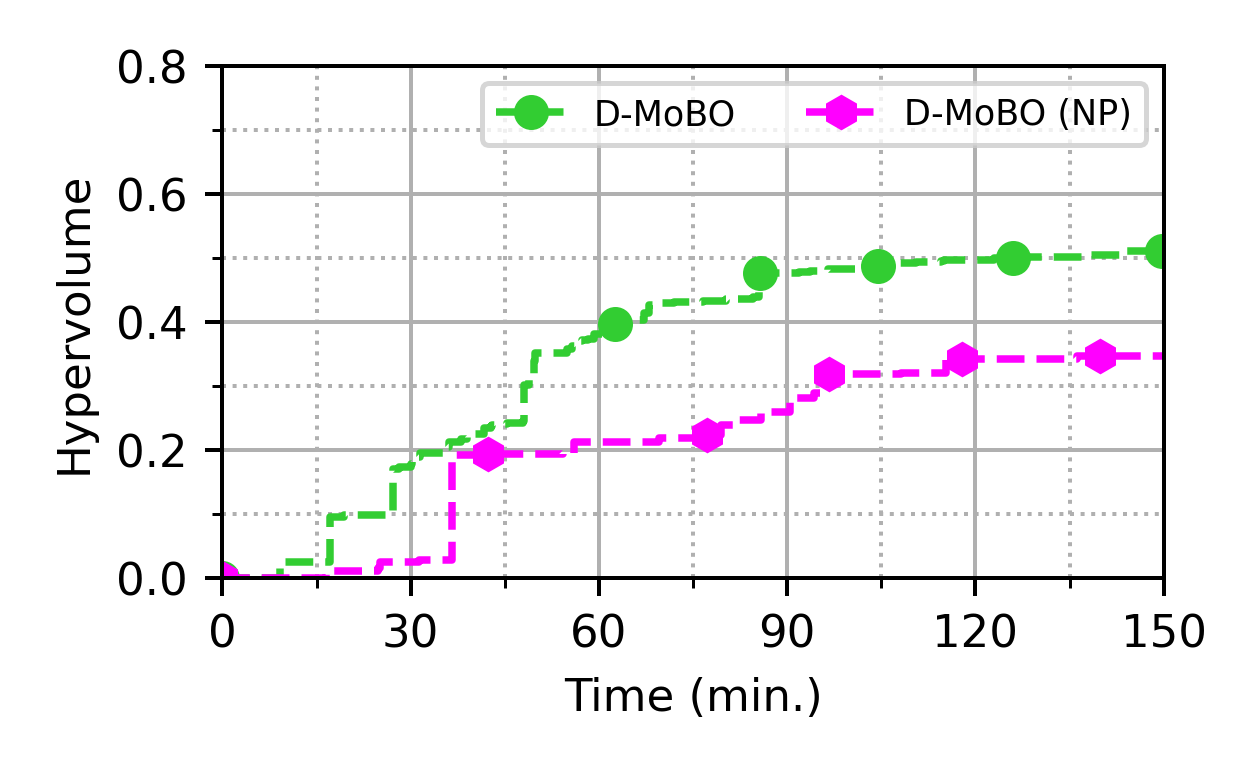}
        \caption{160 Workers}
        \label{fig:hypervolume-vs-time-polaris-combo-d-mobo-constraint-40}
    \end{subfigure}
    \begin{subfigure}{\columnwidth}
        \centering
        \includegraphics[width=0.9\textwidth]{figures/combo/hypervolume-vs-time-polaris-combo-d-mobo-constraint-160.png}
        \caption{640 Workers}
        \label{fig:hypervolume-vs-time-polaris-combo-d-mobo-constraint-160}
    \end{subfigure}
    \caption{Comparing the effect of D-MoBO (with a penalty to enforce the minimum accuracy requirement) and D-MoBO (NP) (without penalty) for different numbers of parallel workers (40, 160, and 640).}
    \label{fig:combo/hypervolume-vs-time-penalty}
\end{figure}

Third, we look at the performance of the optimizers for different numbers of parallel workers in order to evaluate if the order of best performers remains the same. As it can be observed in Figure~\ref{fig:combo/hypervolume-vs-time} for 40, 160, and 640 workers the order of best performers remains the same. This confirms that the conclusions of the study are not a by-product of the scale of the problem.

\begin{figure}[!t]
    \centering
    \begin{subfigure}{\columnwidth}
        \centering
        \includegraphics[width=0.8\textwidth]{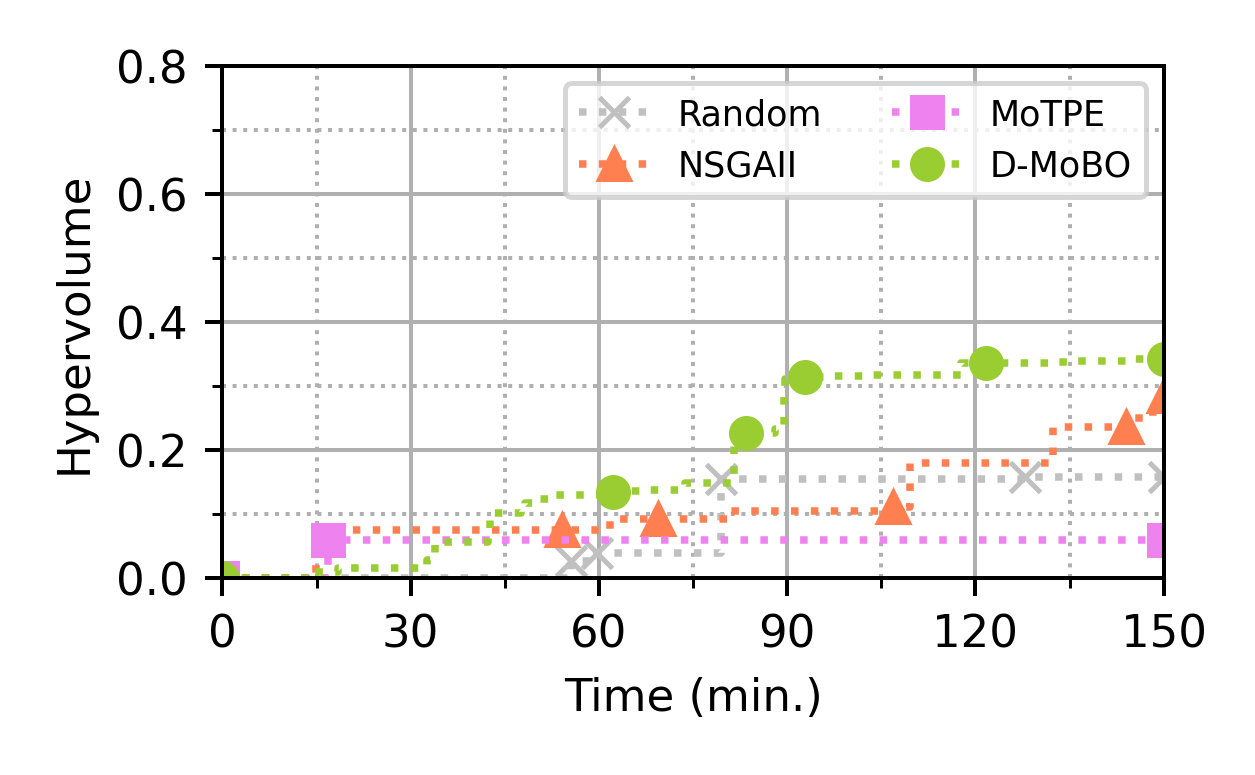}
        \caption{40 Workers}
        \label{fig:combo/hypervolume-vs-time-polaris-combo-all-40}
    \end{subfigure}
    \begin{subfigure}{\columnwidth}
        \centering
        \includegraphics[width=0.9\textwidth]{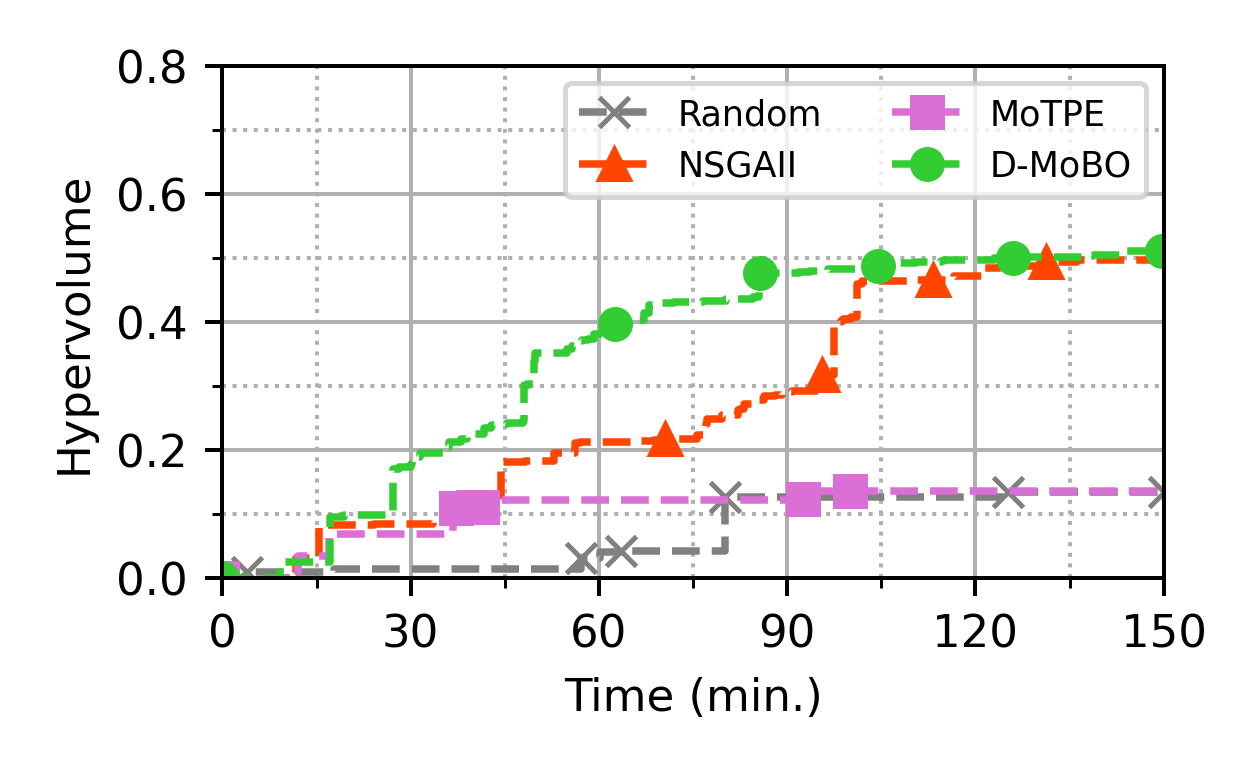}
        \caption{160 Workers}
        \label{fig:combo/hypervolume-vs-time-polaris-combo-all-160}
    \end{subfigure}
    \begin{subfigure}{\columnwidth}
        \centering
        \includegraphics[width=0.9\textwidth]{figures/combo/hypervolume-vs-time-polaris-combo-all-160.png}
        \caption{640 Workers}
        \label{fig:combo/hypervolume-vs-time-polaris-combo-all-640}
    \end{subfigure}
    \caption{Comparing the HVI curves of optimizers for different numbers of parallel workers (40, 160, and 640).}
    \label{fig:combo/hypervolume-vs-time}
\end{figure}

Finally, we look at the gain optimization performance of all considered optimizers when increasing the number of parallel workers. As it can be observed in Figure~\ref{fig:combo/hypervolume-scaling} NGAII and D-MoBO benefit significantly from increasing the number of workers. This shows that there exists a synergy effect between the optimization algorithm and parallel scale. Not all optimization strategies will benefit from parallelism in the same way. For some, it can be wasteful to increase the scale as the outcome will barely change.

\begin{figure*}[!t]
    \centering
    \begin{subfigure}{\columnwidth}
        \centering
        \includegraphics[width=0.9\textwidth]{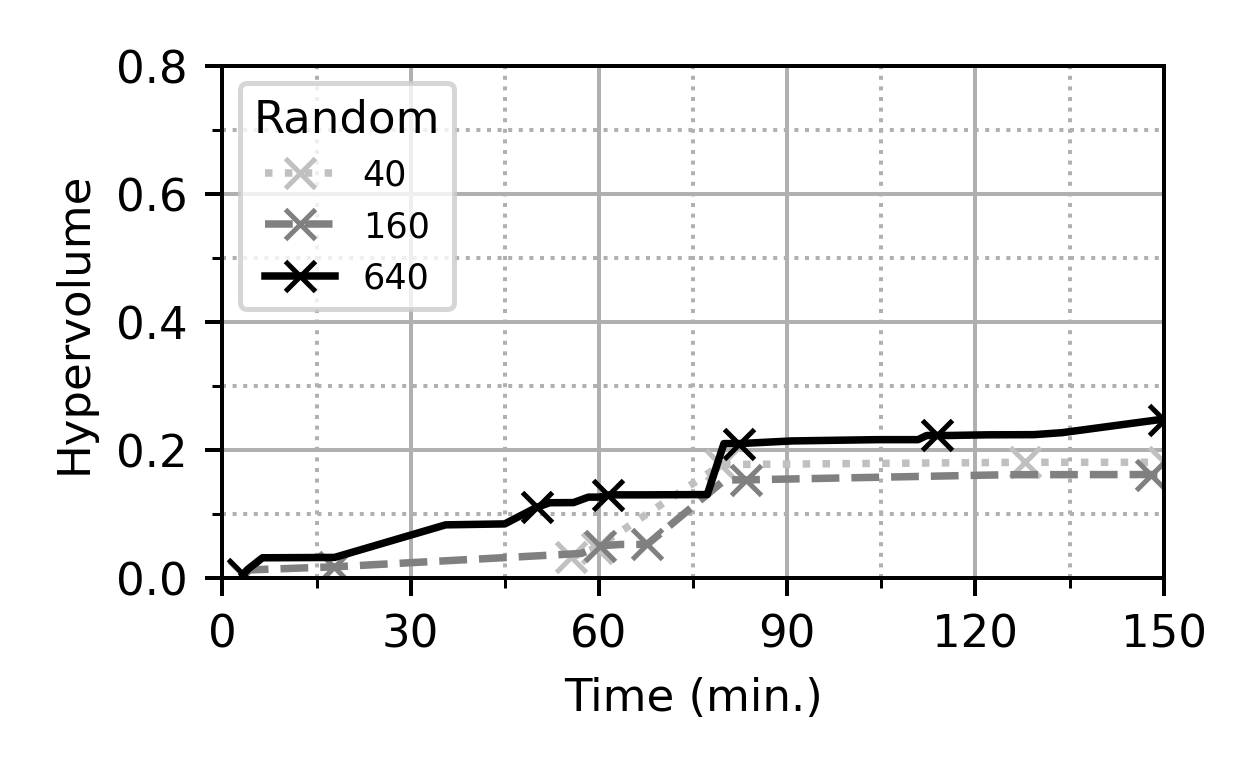}
        \caption{Random}
        \label{fig:combo/hypervolume-scaling/random}
    \end{subfigure}
    \begin{subfigure}{\columnwidth}
        \centering
        \includegraphics[width=0.9\textwidth]{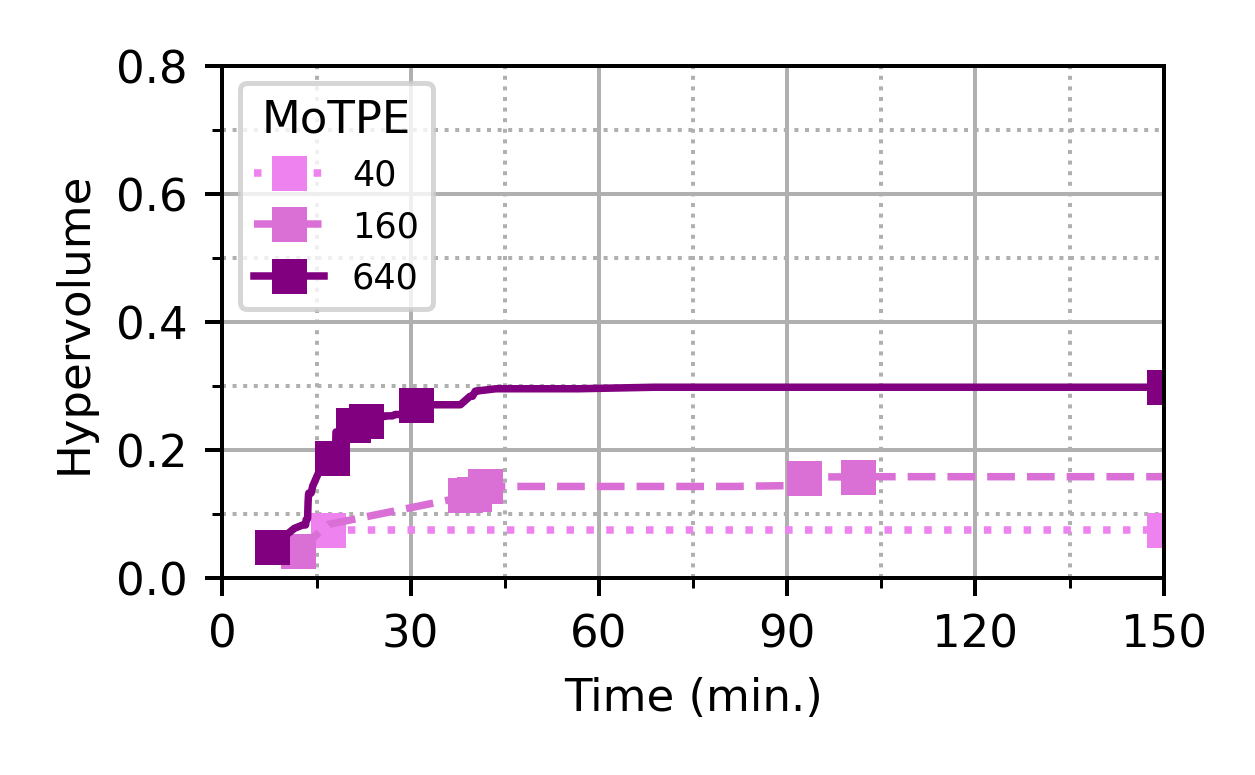}
        \caption{MoTPE}
        \label{fig:combo/hypervolume-scaling/motpe}
    \end{subfigure}
    \begin{subfigure}{\columnwidth}
        \centering
        \includegraphics[width=0.9\textwidth]{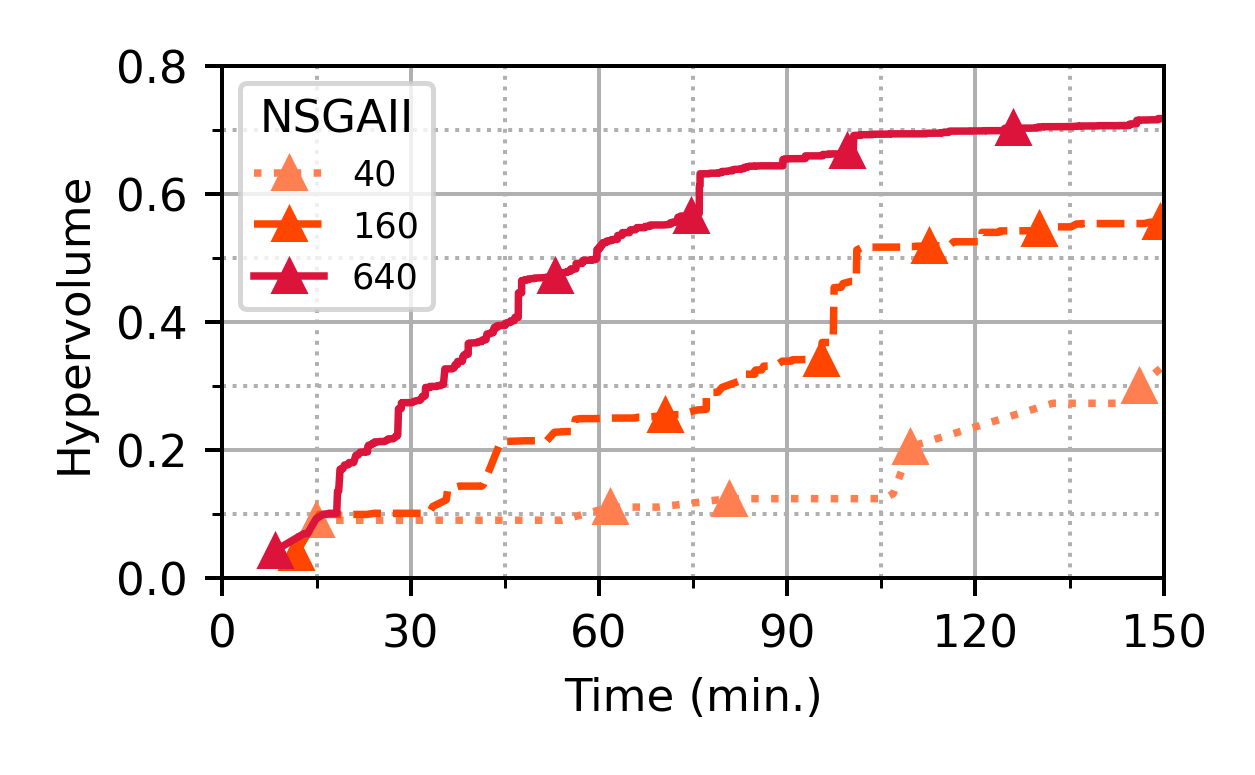}
        \caption{NSGAII}
        \label{fig:combo/hypervolume-scaling/nsgaii}
    \end{subfigure}
    \begin{subfigure}{\columnwidth}
        \centering
        \includegraphics[width=0.9\textwidth]{figures/combo/hypervolume-vs-time-polaris-combo-d-mobo-scaling.png}
        \caption{D-MoBO}
        \label{fig:combo/hypervolume-scaling/d-mobo}
    \end{subfigure}
    \caption{Comparing HVI of MOO algorithms w.r.t. time in minutes for different numbers of parallel workers.}
    \label{fig:combo/hypervolume-scaling}
\end{figure*}

\section{Description of HPC System}
\label{sec:polaris-description}

In this section we provide details on the HPC system used for our experiments on the Combo benchmark.
They were conducted on the Polaris supercomputer at the Argonne Leadership Computing Facility (ALCF). Polaris is a HPE Apollo Gen10+ platform that comprises 560 nodes, each  equipped with a 32-core AMD EPYC "Milan" processor, 4 Nvidia A100 GPUs, and 512 GB of DDR4 memory. The compute nodes are interconnected by a Slingshot network. In our experiments, each worker is attributed 1 GPU. The algorithm is implemented in Python 3.8.13 where the main packages used are deephyper 0.5.0, mpi4py 3.1.3, scikit-learn 1.0.2,  optuna 3.1.0, Redis 7.0.5. Neural networks are implemented in Tensorflow 2.10. The number of workers is increased from 40 (10 nodes), 160 (40 nodes) to 640 (160 nodes).

\end{document}